\DeclareMathOperator*{\argmax}{arg\,max}
\newcommand{\R}{{\mathbb R}}
\newcommand{\N}{{\mathbb N}}
\definecolor{C0}{HTML}{1f77b4}
\definecolor{C1}{HTML}{ff7f0e}
\definecolor{C2}{HTML}{2ca02c}
\definecolor{C3}{HTML}{d62728}
\definecolor{C4}{HTML}{9467bd}
\definecolor{C5}{HTML}{8c564b}
\definecolor{C6}{HTML}{e377c2}
\definecolor{C7}{HTML}{7f7f7f}
\definecolor{C8}{HTML}{bcbd22}
\definecolor{C9}{HTML}{17becf}
\definecolor{gaussian}{HTML}{47908C}
\newtheorem{theorem}{Theorem}
\newtheorem{lemma}[theorem]{Lemma}
\newtheorem{proposition}[theorem]{Proposition}
\newtheorem{corollary}[theorem]{Corollary}
\newtheorem{definition}[theorem]{Definition}
\newtheorem{assumption}{Assumption}
\def\N{\mathbb N}
\def\R{\mathbb R}
\newcommand{\reals}{\mathbb{R}}
\newcommand{\citet}[1]{\textcite{#1}}
\newcommand{\citep}[1]{\cite{#1}}
\title{Feature learning from non-Gaussian inputs: the case of\\ Independent Component Analysis in high dimensions}
\author{Fabiola Ricci}
\author{Lorenzo Bardone}
\author{Sebastian Goldt\thanks{\{fricci, lbardone, sgoldt\}@sissa.it}}
\affil{International School of Advanced Studies (SISSA), Trieste, Italy}
\date{\today}
\begin{document}

\maketitle

\begin{abstract}
  \noindent Deep neural networks learn structured features from complex, non-Gaussian
inputs, but the mechanisms behind this process remain poorly
understood. %
Our work is motivated by the observation that the first-layer filters learnt by
deep convolutional neural networks from natural images resemble those learnt by
independent component analysis (ICA), a simple unsupervised method that seeks
the most non-Gaussian projections of its inputs. %
This similarity suggests that ICA provides a simple, yet principled model for
studying feature learning. %
Here, we leverage this connection to investigate the interplay between data
structure and optimisation in feature learning for the most popular ICA
algorithm, FastICA, and stochastic gradient descent (SGD), which is used to
train deep networks. %
We rigorously establish that FastICA requires at least $n\gtrsim d^4$ samples to
recover a single non-Gaussian direction from $d$-dimensional inputs on a simple
synthetic data model. We show that vanilla online SGD outperforms FastICA, and
prove that the optimal sample complexity $n\gtrsim d^2$ can be reached by
smoothing the loss, albeit in a data-dependent way. We finally demonstrate the
existence of a search phase for FastICA on ImageNet, and discuss how the strong
non-Gaussianity of said images compensates for the poor sample complexity of
FastICA.

\end{abstract}


\section{Introduction}%
\label{sec:introduction}

The practical success of deep neural networks is generally attributed to their
ability to learn the most relevant input features directly from
data~\citep{lecun2015deep}. A classic example for this feature learning are deep
convolutional neural networks (CNNs), which recover Gabor
filters~\citep{gabor1946theory} like the ones shown in \cref{fig:figure1}(a) in
their first layer when trained on sets of natural images like
ImageNet~\citep{krizhevsky2012imagenet, guth2024universality}. Since Gabor
filters are localised in both space and frequency domains, they are effective for
capturing edges and textures in natural
images~\citep{hyvarinen2009natural}. 
How these filters emerge from training a deep network end-to-end on natural
images with stochastic gradient descent remains a key question for the theory of
neural networks.

\begin{figure*}[t!]
  \centering
  \includegraphics[trim=0 640 0 0,clip,width=\linewidth]{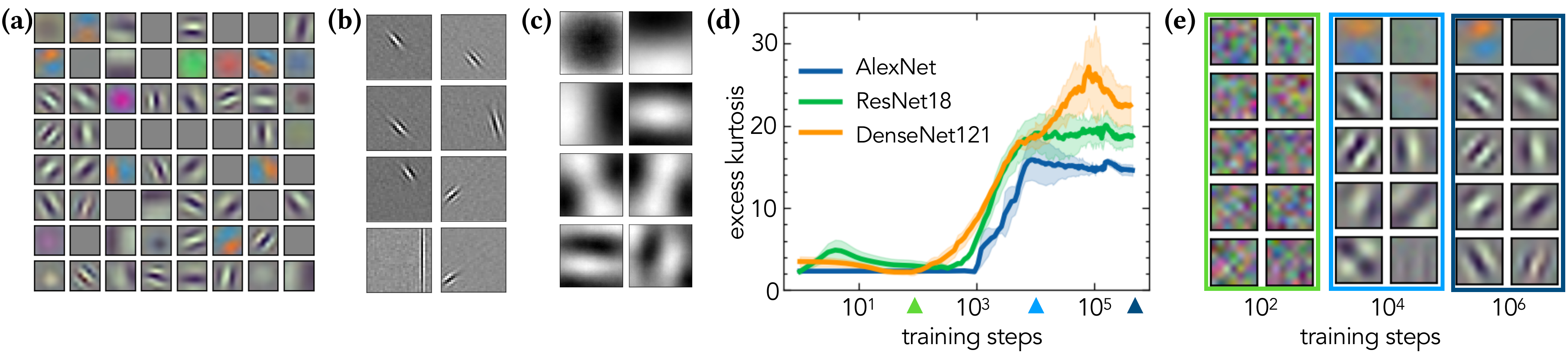}
  \caption{\label{fig:figure1} \textbf{Independent component analysis yields
      similar filters as deep convolutional neural networks.} \textbf{(a)} The
    first-layer convolutional filters extracted from an AlexNet trained on
    ImageNet show the hallmarks of Gabor filters~\citep{gabor1946theory}, like
    orientation, localisation, and bandwidth-specificity. \textbf{(b)}
    Independent components found with online SGD on \mbox{100 000}
    $64 \times 64$ patches extracted from ImageNet. \textbf{(c)} Leading
    principal components of the same ImageNet patches.  \textbf{(d)} Excess
    kurtosis of the representations of the first convolutional layer of three
    deep CNNs during training on ImageNet. For each filter, we compute the dot
    product $s$ between its weights and patches sampled from ImageNet,
    then compute its excess kurtosis averaged over image patches and
    neurons. \textbf{(e)} Snapshots of a subset of convolutional filters of
    AlexNet at various points during training. As the excess kurtosis of
    representations increases, prototypes of Gabor filters emerge, which are
    then sharpened during the remainder of training. Full experimental details
    in \cref{app:details-figure1}.}
\end{figure*}

Deep neural networks are not the only approach to learning Gabor filters
directly from data. Instead, it is well-known that independent component
analysis (ICA)~\citep{comon1994independent, bell1995information,
  olshausen1996emergence, bell1996edges, hyvarinen2000independent}, a simple,
unsupervised learning algorithm, learns filters that are similar to the
first-layer filters of a deep CNN when applied to patches of natural images, see
\cref{fig:figure1}(b). The key idea of ICA is to identify directions of maximal
non-Gaussianity in input space, suggesting that the non-Gaussian structure in
data plays a crucial role in shaping the representations learnt by deep neural
networks~\citep{refinetti2023neural}. More precisely, given a set of $n$
zero-mean inputs $\mathcal{D} = \{x^1, x^2, \ldots, x^n\}$, ICA seeks the
direction \mbox{$w^*\in\reals^d$} that maximises the non-Gaussianity of the
projections of the inputs $s := w \cdot x$,
\begin{equation}
  \label{eq:ica}
  w^* := \argmax_{\norm{w} = 1} \; \mathbb{E}_{\mathcal{D}}\, G(w \cdot x),
\end{equation}
where the contrast function $G$ is a measure of the non-Gaussianity of the
projection $s$, for example its excess kurtosis \mbox{$G(s) = s^4 - 3$}. For
ICA, inputs are always pre-whitened, so ICA can be seen as a refinement of
standard principal component analysis, which instead seeks the projection of the
data that maximises its variance, i.e.\ $G(s) = s^2$. On images, PCA yields a
completely different set of filters which is spatially extended and oscillating
due to the approximate translation-invariance of the image
patches~\citep{hyvarinen2009natural}, see \cref{fig:figure1}(c).

A closer look at the dynamics of the CNNs trained on the standard ImageNet
dataset~\citep{deng2009imagenet} reveals further similarities between ICA and
deep CNNs. In \cref{fig:figure1}(d), we plot the excess kurtosis of the
first-layer projections $s^k_\mu= w^k \cdot x^\mu$ of ImageNet patches $x^\mu$
along the weights~$\{w^k\}_{k=1,\ldots,K}$ of the $K$ first-layer filters of
three different CNNs during training, averaged over patches and neurons. If the
projections $s^k_\mu$ are normally distributed, the excess kurtosis is (close
to) zero, as is the case early during training when weights are close to their
(uniform) initialisation. In all three networks, the non-Gaussianity of
projections increases sharply between $10^3$ and $10^4$ steps, before
plateauing. Interestingly, the moment in which the deep CNN focuses on
non-Gaussian projections is precisely the moment in which the Gabor filters
form, see \cref{fig:figure1}(e).

Taken together, the similarity between filters obtained by deep CNNs and ICA and
the dynamics of the excess kurtosis of first-layer representations of deep CNNs
suggest that ICA can serve as a simplified, yet principled model for studying
feature learning from non-Gaussian inputs.

In this work, we develop a quantitative theory of feature learning from
non-Gaussian inputs by providing a sharp analysis of the sample complexity of
two algorithms for Independent Component Analysis, namely FastICA, the most
popular ICA algorithm used in practice, and SGD, which is used to train deep
neural networks.

Early theoretical works on independent component analysis focused mostly on the
speed of convergence of different algorithms for finding $w^*$ in the classical
regime of fixed input dimension and large number of samples; the most popular
algorithm for performing ICA, FastICA~\citep{hyvarinen2000independent}, derives
its name from the fact that it converges with a quadratic rather than linear
rate.  However, ICA struggles in modern applications where inputs tend to be
high-dimensional; for example, its performance is highly sensitive to its
initial conditions~\citep{zarzoso2006fast, auddy2024large}. For algorithms
running in high dimensions, the main bottleneck is typically not the speed of
convergence, but the length of the initial search phase before the algorithm
recovers any trace of a signal~\citep{bottou2003stochastic,
  bottou2003large}. The focus of our work is therefore in establishing lower
bounds on the sample complexity required by FastICA and SGD for escaping the
search
phase. 

Recently, \citet{auddy2024large} showed that in high
dimensions, 
$n \gtrsim d/\varepsilon^2$ samples are necessary and sufficient to ensure the
existence of estimates of the non-Gaussian 
directions with an error up to $\varepsilon$ when given unlimited computational
resources, while the sample complexity required for polynomial time algorithms
is \mbox{$n \gtrsim d^2/\varepsilon^2$}. They also proposed an initialisation
scheme for FastICA that reaches the optimal performance. While these results
establish fundamental statistical and computational limits of ICA, a precise and
rigorous analysis of how the two main algorithms used in practice, FastICA and
SGD, escape the search phase starting from random initialization is lacking.

In this paper, we derive sharp algorithmic thresholds on the sample complexity
required by ICA to recover non-Gaussian features of high-dimensional inputs. Our
analysis leverages recent breakthroughs in the analysis of supervised learning
dynamics with Gaussian inputs~\citet{benarous2021online, damian2023smoothing,
  dandi2024two} for analysing the unsupervised case with non-Gaussian inputs. Our main results are as follows:
\begin{itemize}
\item We prove that the popular FastICA algorithm exhibits poor sample complexity, requiring a
  large-size batch of $n \gtrsim d^{4}$ to recover a hidden direction in the
  first step (see~\cref{sec:results-fastica});
\item We prove that the sample complexity of online SGD can be reduced down to the
  computational threshold of $n\gtrsim d^{2}$, at the cost of fine-tuning the
  contrast function in a data-dependent way (see \cref{subsec:smoothing});
\item We demonstrate that FastICA exhibits a search phase at linear sample complexity when trained
  on ImageNet patches, but recovers a non-Gaussian direction at quadratic sample complexity, and we discuss how the strong non-Gaussianity of real images speeds up recovery (see
  \cref{sec:imagenet-experiments}).
\end{itemize}

\noindent
The main technical challenge in establishing our results is dealing with the
non-Gaussianity of the inputs, which adds complexity to our analysis both by
introducing additional terms whose statistics need to be controlled, and by impacting
the intrinsic properties of the loss function.

\section{Setup}

We first describe the most popular algorithm to perform ICA, called FastICA, and we introduce the classic ICA model for synthetic data which will allow us to systematically test the performance of FastICA.

\subsection{FastICA and contrast functions}%
\label{sec:fastica-algorithm}

Performing ICA on a generic set of inputs $x \in \reals^d$ drawn from a data distribution $\mathbb{P}$ with zero mean and identity covariance means finding a unit vector \mbox{$w \in \mathbb{S}^{d-1}$} which is an extremum of the \textit{population} loss
\begin{equation}
    \label{eq:loss_ica}
    \mathcal{L}(w) := \mathbb{E}_{\mathbb{P}} \; G(w \cdot x),
\end{equation}
where $G: \mathbb{R} \to \mathbb{R}$ is a suitable ``contrast function'' which measures the non-Gaussianity of the projections $s = w \cdot x$. The two standard choices for $G(s)$ used in practice~\citep{scikit-learn} are
\begin{small}
\begin{equation}
  \label{eq:contrasts}
    G(s) := - e^{-s^2/2} \qand G(s) := \dfrac{\log \cosh(as)}{a}
\end{equation}
\end{small}
for some $a \in [1,2]$. Another classical choice is the excess kurtosis,
$G(s) := s^4 -3$, even though it is more sensitive to outliers; therefore, the
first two contrast functions are preferred in practice since their growth is
slower than polynomial. In practice, the expectation is approximated as an
average over a set of $n$ centered and whitened inputs
$\mathcal{D}=\{ x^{\mu}\}_{\mu=1}^n$ (that we assume to be drawn i.i.d.\ from
$\mathbb{P}$).

As an optimisation algorithm, we focus on the classic FastICA algorithm of
\citet{hyvarinen2000independent}. While a number of works have recently proposed
alternative algorithms with provable guarantees,
e.g.~\citet{arora2012provable}, 
\citet{anandkumar2014tensor}, 
or \citet{voss2015pseudo}, we focus instead on FastICA since it is the most
popular ICA algorithm (it is the reference implementation in
scikit-learn~\citep{scikit-learn}). 

FastICA finds extremal points of the population loss~\eqref{eq:loss_ica} via a
second-order fixed-point iteration. Here, we simply state the FastICA algorithm
for convenience; see \citet{hyvarinen2000independent} for a detailed derivation
and discussion. Given a dataset $\mathcal{D}$, we initialise the weight vector
randomly on the unit sphere, $w_0 \sim \text{Unif}(\mathbb{S}^{d-1})$, and then
iterate the \textbf{FastICA updates} for $t\ge 1$ until convergence:
\begin{equation}\label{eq:fastica}
\begin{cases}
    \widetilde{w}_t & = \mathbb{E}_{\mathcal{D}}[x\, G'(w_{t-1}\cdot x)] - \mathbb{E}_{\mathcal{D}}[G''(w_{t-1}\cdot x)] w_{t-1},\\[2pt]
    w_t & = \widetilde{w}_t /\|\widetilde{w}_t\|.
\end{cases}
\end{equation}
The FastICA iteration is made of two contributions: a ``gradient'' term
\mbox{
\begin{small}
\begin{math}
\nabla_w \mathcal{L}(w, x) \bigl|_{w=w_{t-1}} = x \, G'(w_{t-1}\cdot x),
\end{math}
\end{small}}
which drives the algorithm towards a direction where the gradient vanishes, and the ``regularisation''~$\mathbb{E}_{\mathcal{D}}[G''(w_{t-1}\cdot x)] w_{t-1},$ which ensures quadratic rather than the linear convergence of first-order methods, giving FastICA its name. As we will show in \cref{prop: infinite_batch_size}, the regularisation term is also key for efficient learning.  

\begin{figure*}[t!]
  \centering
  \includegraphics[trim=0 0 0 110,clip,width=\linewidth]{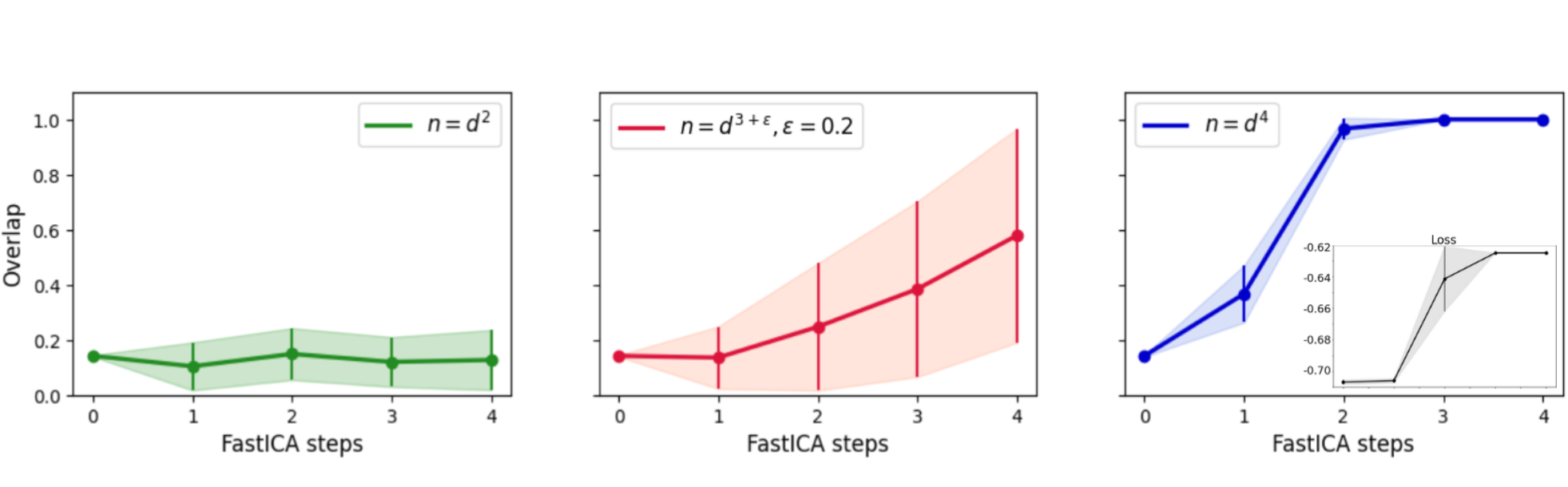}%
  \caption{\label{fig:figure2} \textbf{Performance of FastICA on the spiked
      cumulant model for various sample complexity regimes.} We run the standard
    FastICA updates~\eqref{eq:fastica} on data drawn from a noisy ICA
    model~\eqref{eq:spiked_cumulant}. At each step of the algorithm, we draw a
    new batch of size $n=d^2$ (\textbf{left}), $n=d^{3+ \varepsilon}$ with
    $\varepsilon=0.2$ (\textbf{middle}) and $n=d^4$ (\textbf{right}). In the
    latter case, we also show the corresponding loss function.  We plot the
    overlap between the planted non-Gaussian direction $v$ and the estimate $w$
    produced by FastICA. While recovery of the spike is theoretically possible
    at $n=d^2$ (see \cref{sec:fundamental-limits}), FastICA does not recover the
    spike at that sample complexity, and instead requires $n=d^4$ samples to
    recover the spike $v$ reliably. \Cref{thm:finite_batch_size} confirms
    this picture for a general data model. Full experimental details in
    \cref{app:details-fastica} and additional cases $\varepsilon = 0.5, 0.8$ in
    \cref{fig:new_epsilon}.}
\end{figure*}

\subsection{The ICA data model}%
\label{sec:data-model}

To carefully test the sample complexity of FastICA and other ICA algorithms like SGD in a controlled setting, we first study the case of synthetic data according to a noisy version of the standard ICA data model before moving to real data in \cref{sec:imagenet-experiments}. The idea is to have inputs that follow an isotropic Gaussian distribution in all but one direction that we will call the \emph{spike} $v\in \mathbb{S}^{d-1}$. The projection of the inputs along the spike yields a non-Gaussian distribution, so this is the direction that ICA should recover. Specifically, we assume that the data points $\mathcal{D} = \{x^{\mu}\}_{\mu=1}^n$ are drawn according to
\begin{equation}
    \label{eq:spiked_cumulant}
    x^{\mu} = S\left(\sqrt{\beta} \,\nu^{\mu} v + z^{\mu}\right) \in \mathbb{R}^{d},
\end{equation}
where \mbox{$z^{\mu} \sim \mathcal{N}(0, \mathds{1}_d)$} is a vector with white noise, while the scalar random variable $\nu^{\mu}$ is the latent variable for each input, and follows a non-Gaussian distribution. For concreteness, we choose a Rademacher distribution under which $\nu^{\mu}=\pm 1$ with equal probability. The signal-to-noise ratio $\beta \geq 0$ sets the relative strength of the non-Gaussian component along $v$ compared to the Gaussian part~$z^\mu$ of each input. 
Since whitening the inputs is a standard preprocessing step for FastICA~\citep{hyvarinen2000independent}, we finally pre-multiply the inputs with the whitening matrix
\begin{equation*}
    S = \mathds{1}_d - \dfrac{\beta}{1 + \beta + \sqrt{1 + \beta}} v v^{\top}\in
    \mathbb{R}^{d \times d},
\end{equation*}
which ensures that the covariance of the inputs is simply the identity. This
whitening step makes it impossible to trivially detect the non-Gaussian
direction $v$ by performing PCA or any other method which is based only on the
information in the first and second cumulant. 
An alternative perspective of the ICA model is that it provides inputs where the
noise is correlated, and in particular has a negative correlation with the spike
to be recovered.

\subsection{Fundamental limits of recovery}%
\label{sec:fundamental-limits}

We now briefly recall the fundamental limits of recovering the spike in the ICA data model \eqref{eq:spiked_cumulant}, which will serve as benchmarks to evaluate the performance of specific algorithms like FastICA or SGD. From an information-theoretic point of view, i.e.\ when given unbounded computational resources, the presence of the non-Gaussian direction $v$ can already be detected using a linear number of samples $n\gtrsim d$. However, detecting the spike at this sample complexity is only possible through exponentially complex algorithms that are based on exhaustive search. Efficient algorithms, that can run in polynomial time in $d$, require at least a quadratic number of samples $n=\omega(d^2)$.

For details on the algorithmic result, see \citet{szekely2024learning,
  dudeja-hsu} and \citet{auddy2024large} for a thorough analysis using
low-degree methods, and \citet{diakonikolas_pancakes} for a Statistical Query
(SQ) analysis, which reaches equivalent predictions using the low-degree to SQ
equivalence \citep{brennan21:SQ-LDLRequivalence}. \citet{auddy2024large}
establishes the information-theoretic limits of reconstructing the spike using a
min-max type analysis. A similar statistical-to computational gap appears in the
related problem of tensor PCA \citep{richard2014statistical}.

\subsection{Experiment: FastICA requires a lot of data to recover the non-Gaussian direction}

We are now in a position to evaluate the FastICA algorithm in the controlled setting of the noisy ICA model~\eqref{eq:spiked_cumulant}, where the goal is to recover the planted non-Gaussian direction (or ``feature'') $v$. Specifically, we can compare the sample complexity of FastICA, i.e.\ the number of samples FastICA requires to recover $v$, to the fundamental limits of efficient or polynomial-time recovery, which is $n \gtrsim d^2$.

In \cref{fig:figure2}, we plot the absolute value of the overlap between the spike $v$ and the estimate $w$ obtained by running FastICA for a couple of steps with $n=d^\theta$ samples. In the plot, we consider the limit of online learning, in the sense that we draw a new training set at each step of the algorithm; in \cref{fig:fastica_rade_reuse}, we show that the picture does not change when using the same training set repeatedly; see also our discussion in \cref{sec:fastica-theory-discussion}.

We can see immediately that at quadratic sample complexity, FastICA does not recover the spike at all. Instead, the overlap is stuck at a value of roughly $1 / \sqrt{d}$, which is the typical overlap of a randomly direction in $\reals^d$ with the spike. This small value for the overlap suggests that the algorithm is stuck in the search phase, and does not recover the spike at quadratic sample complexity. With $n=d^4$ samples instead, FastICA recovers the spike within a few steps. In the intermediate regime of $n=d^{3+\varepsilon}$ samples, $\varepsilon = 0.2$ , the algorithm recovers the spike more slowly, and there are large fluctuations. 

These experiments suggest that FastICA has a poor sample complexity that is quartic in the input dimension for recovering the spike, with an intermediate regime at $n=d^{3+\varepsilon}$ with $\varepsilon \in (0,1)$. Our theoretical analysis of FastICA will now confirm that these are indeed the fundamental limits of FastICA.

\section{Theoretical Results} %

Before diving into our analysis of FastICA and stochastic gradient descent (SGD) for performing ICA, we highlight how to address the key technical difficulty in analysing learning from non-Gaussian inputs.

\subsection{ICA, the likelihood ratio, and information exponents \label{sec:lihelihoo_ratio}}

The key difficulty in analysing the ICA loss \eqref{eq:loss_ica} or the FastICA update \eqref{eq:fastica} is that it entails an average over an explicitly non-Gaussian distribution of the inputs. This is in contrast to the vast majority of works on (supervised) learning algorithms, which usually consider a Gaussian input distribution, an input distribution that is effectively Gaussian via the Gaussian Equivalence Theorem, or inputs that are distributed uniformly on the unit hypercube (as discussed at the end of the section).

The key idea of the theoretical analysis of FastICA and SGD is to rewrite the average over the non-Gaussian inputs as an average over the standard normal distribution \mbox{$\mathbb{P}_0 = \mathcal{N}(0,\mathds{1}_d)$}, as
\begin{equation}
    \mathcal{L}(w) := \mathbb{E}_{\mathbb{P}}[G(w \cdot x)] 
    =  \mathbb{E}_{\mathbb{P}_0}[G(w \cdot x) \ell(v \cdot x)]
\end{equation}
by introducing the \emph{likelihood ratio}
\begin{small}
\begin{equation}
    \label{eq:lr}
    \ell(s) := \frac{\text{d} \mathbb{P}}{\text{d}\mathbb{P}_0}(s),
\end{equation}
\end{small}
akin to how one would proceed in an analysis with the second moment method in hypothesis testing~\citep{kunisky2019notes}. Intuitively, the likelihood ratio (or more precisely, its norm) measures how different the distribution of the projection $s = w\cdot x$ is from a standard Gaussian distribution. 
The likelihood ratio will thus be the key object in our study to determine the number of samples that a given algorithm like FastICA requires to find a projection that is distinctly non-Gaussian and yields a large likelihood ratio.

Under the noisy ICA model~\eqref{eq:spiked_cumulant}, the likelihood ratio~\eqref{eq:lr} 
depends only on the projection of the inputs along the non-Gaussian direction~$v$ to be learnt. We can expand 
the population loss in terms 
of a single scalar variable, the overlap $\alpha := w \cdot v$ between weight $w$ and spike $v$:
\begin{equation}
    \label{eq:loss_expansion}
    \mathcal{L}(w) =  \mathbb{E}_{\mathbb{P}_0}[G(w \cdot x) \ell(v \cdot x)] =  \sum_{k \geq 0}^{\infty} \dfrac{c_k^G c_k^{\ell}}{k!} \alpha^k,
\end{equation}
where the last equality has been obtained by expanding the contrast function $G$ and the likelihood ratio $\ell$ in series of Hermite polynomials and by using the orthogonality of Hermite polynomials (\cref{bi_orth} in \cref{app:Hermite}). A key quantity for the analysis of ICA is the order $k^*$ of the first term in the expansion \eqref{eq:loss_expansion} that has a non-zero coefficient. This quantity was introduced as the ``\textit{information exponent}'' by \citet{benarous2021online} in the context of supervised learning, when they showed that it governs the sample complexity of online SGD in single-index models, like the noisy ICA model. Related quantities like the \emph{leap index} of \citet{abbe2023sgd, dandi2024two} 
and the \textit{generative exponent} of \citet{damian2024computational} govern the sample complexity for more complex data models with several directions to be learnt. Here, we note that due to the whitening of the inputs, the population loss \eqref{eq:loss_ica} for the noisy ICA model model has information exponent $k^*=4$, as we show in \cref{app:details_spiked_cumulant}. 

\subsection{Main assumptions}

For the following theoretical analysis, we assume that:
\begin{assumption}
\label{assumption1}
The contrast function $G$ and the data distribution $\mathbb{P}$ are such that the loss \eqref{eq:loss_ica} is a function of the overlap \mbox{$\alpha:= w \cdot v$}, for a given spike $v$ with unit norm. Moreover, we assume that \mbox{$\ell(\alpha) = \frac{\text{d} \mathbb{P}}{\text{d}\mathbb{P}_0}(\alpha) \in {\rm L} ^2\left(\R,\mathbb P_0\right)$}. 
\end{assumption}
\noindent
This assumption is trivially satisfied for the noisy ICA model~\eqref{eq:spiked_cumulant} and the standard contrast functions in \cref{eq:contrasts}, but it allows us to prove our results on the sample complexity of FastICA (\cref{thm:finite_batch_size}) and smoothed SGD (\cref{thm: smooth}) for arbitrary sub-Gaussian latent variables $\nu$ (see \cref{def:subgaussianity} in \cref{app:preliminaries}). In principle, these hypotheses exclude the case of a fat-tailed distributions on the latent variable, i.e.\ those that do not belong to $ {\rm L} ^2\left(\R,\mathbb P_0\right)$. However, the results of simulations with a Laplace prior on $\nu$ (see \cref{fig:fastica_laplace}) reveal the same picture as in the sub-Gaussian case, suggesting that this assumption could be further relaxed.

\begin{assumption}[Contrast function]\label{assumption2}
We assume $G$ to be even. For our analysis of FastICA, we require $G \in C^3(\mathbb{R})$ with bounded derivatives. For smoothed SGD in \cref{subsec:smoothing}, we additionally require
 \mbox{\begin{math}
     \sum_{k \geq 0}^{\infty}(c_k^G)^2/ k! < 1
 \end{math}}
and that there exist some $C_1, C_2 > 0$ such that, for any \mbox{$s \in \mathbb{R}$}, \mbox{$\lvert G'(s) \rvert \leq C_1 (1+s^2)^{C_2}$} holds.
\end{assumption}

\subsection{Sample complexity of FastICA}%
\label{sec:results-fastica}

Our goal is to rigorously quantify the sample complexity of FastICA on the noisy ICA model \eqref{eq:spiked_cumulant}.
FastICA proceeds by performing a few iterations of the update~\eqref{eq:fastica} with the full data set. This setting is precisely the large-batch~\citep{ba2022high,
damian2022neural} or ``giant step''~\citep{dandi2024two} regime that has attracted a lot of theoretical interest recently. Following the approach of these recent works, the key idea of our analysis is to study whether a single step of FastICA with a large number of samples $n \asymp d^\theta$ yields an estimate $w_1$ that has an overlap with the spike $v$ that does not vanish with the input dimension. 

\subsubsection{Population loss: the importance of regularisation}

It is instructive to first consider briefly the ideal case of infinite data. The following \cref{prop: infinite_batch_size} shows that FastICA is able to fully recover the spike $v$ as $d\to\infty$ if we replace the empirical average $\mathbb{E}_{\mathcal {D}}$ with the population average $\mathbb{E}_{\mathbb{P}}$. More interestingly, we show that it is the regularisation term, originally introduced to speed up convergence (see \cref{sec:fastica-algorithm}), that gives the essential contribution to escape mediocrity at initialisation.

\begin{proposition}[Infinite batch size]
  \label{prop: infinite_batch_size}
  Assume that the inputs are distributed according to the noisy ICA model
  \eqref{eq:spiked_cumulant} and that $c_4^G \neq 0$ - this holds for the
  standard contrast functions in \cref{eq:contrasts}.  Set
  $\alpha_1 := w_1 \cdot v$, where $w_1$ is the updated weight vector given by
  the first iteration of FastICA.
    
  Then, in the infinite batch-size limit, we have that
  \begin{equation*}
    \alpha^2_1 = 1 - o(1).
  \end{equation*}
  On the other hand, if the regularisation term is missing and only the
  ``gradient step" is taken, i.e. the iteration reads
  \begin{math}
    \widetilde{w}_t = \mathbb{E}[ x \, G'(w_{t-1} \cdot x)],
  \end{math}
  then we obtain $\alpha_1^2 = O(1/d)$.
\end{proposition}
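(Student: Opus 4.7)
The plan is to combine the likelihood-ratio representation from \cref{sec:lihelihoo_ratio} with Stein's integration-by-parts formula under $\mathbb{P}_0=\mathcal{N}(0,\mathds{1}_d)$. A preliminary observation is that under~\eqref{eq:spiked_cumulant} the projection of $x$ onto the hyperplane orthogonal to $v$ is standard Gaussian on that hyperplane and independent of $v\cdot x$, so the Radon-Nikodym derivative depends only on a scalar, $\ell(x)=\ell_1(v\cdot x)$, and for any integrable $f$ we have $\mathbb{E}_{\mathbb{P}}[f(x)]=\mathbb{E}_{\mathbb{P}_0}[f(x)\,\ell_1(v\cdot x)]$.

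The key step is to apply Stein's lemma to $f(x)=G'(w_0\cdot x)\,\ell_1(v\cdot x)$ under $\mathbb{P}_0$, which gives the explicit two-term decomposition
\begin{equation*}
\mathbb{E}_{\mathbb{P}}[x\,G'(w_0\cdot x)] \;=\; \mathbb{E}_{\mathbb{P}_0}[\nabla f(x)] \;=\; A\,w_0 \,+\, B\,v ,
\end{equation*}
with $A=\mathbb{E}_{\mathbb{P}_0}[G''(w_0\cdot x)\,\ell_1(v\cdot x)]$ and $B=\mathbb{E}_{\mathbb{P}_0}[G'(w_0\cdot x)\,\ell_1'(v\cdot x)]$. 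The regularisation is exactly $\mathbb{E}_{\mathbb{P}}[G''(w_0\cdot x)]\,w_0=A\,w_0$ by the same likelihood-ratio identity, so it cancels the $w_0$-component of the gradient term perfectly. This leaves $\widetilde{w}_1=B\,v$, hence $w_1=\sign(B)\,v$ and $\alpha_1^2=1$ whenever $B\neq 0$.

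To check that $B\neq 0$, I would expand $G'$ and $\ell_1'$ in Hermite polynomials and invoke the bi-orthogonality identity (\cref{bi_orth}), obtaining $B=\sum_{k\geq 1}\tfrac{c_k^G c_k^\ell}{(k-1)!}\,\alpha_0^{k-1}$. Since the noisy ICA model has information exponent $k^*=4$ (\cref{app:details_spiked_cumulant}), this reduces to $B=\tfrac{c_4^G c_4^\ell}{6}\,\alpha_0^3+O(\alpha_0^5)$. The initial overlap $\alpha_0=w_0\cdot v$ of a uniform direction on $\mathbb{S}^{d-1}$ is almost surely non-zero and concentrates at scale $1/\sqrt{d}$, so $B\neq 0$ a.s.\ and $\alpha_1^2=1-o(1)$ follows. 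For the gradient-only variant, $\widetilde{w}_1=A\,w_0+B\,v$; the same Hermite expansion yields $A=c_2^G+O(\alpha_0^2)$, with $c_2^G\neq 0$ for both standard contrasts. The $v$-signal is then of size $|A\alpha_0+B|\sim |c_2^G\alpha_0|$ while $\|\widetilde{w}_1\|\sim |c_2^G|$, so normalising gives $\alpha_1\sim \alpha_0$ and $\alpha_1^2=O(1/d)$.

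The main technical subtlety is justifying Stein's identity for $f(x)=G'(w_0\cdot x)\,\ell_1(v\cdot x)$: this reduces to verifying that $G'\ell_1$, $G''\ell_1$ and $G'\ell_1'$ are all $\mathbb{P}_0$-integrable, which follows from the boundedness of $G'$ and $G''$ in \cref{assumption2}, the $L^2(\mathbb{P}_0)$ condition on $\ell_1$ in \cref{assumption1}, and a Cauchy-Schwarz bound for the term involving $\ell_1'$. No concentration estimates are needed because we are in the infinite-data limit: all randomness enters through $\alpha_0$, whose law on the sphere is explicit.
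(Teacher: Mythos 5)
Your argument is correct, and it reaches the conclusion by a genuinely different route than the paper. The paper splits $x\,G'(w_0\cdot x)$ into its spherical and radial components, expands each piece and the regularisation term in Hermite series up to sixth order using \cref{bi_orth} and Corollary~\ref{app:same_argument_hermite}, and verifies the cancellation of the $w_0$-components order by order, ending with $\widetilde w_1\cdot v$ and $\|\widetilde w_1\|$ both equal to $\tfrac{c_4^Gc_4^\ell}{3!}\alpha_0^3$ at leading order. Your Stein decomposition $\mathbb{E}_{\mathbb{P}}[x\,G'(w_0\cdot x)]=A\,w_0+B\,v$ with $A=\mathbb{E}_{\mathbb{P}}[G''(w_0\cdot x)]$ shows that this cancellation is in fact \emph{exact}: in the population limit $\widetilde w_1=B\,v$ identically, so $\alpha_1^2=1$ whenever $B\neq 0$, which is slightly stronger than the stated $1-o(1)$ and makes transparent why the regularisation term is precisely the right counter-term (it is the Stein correction along $w_0$). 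The Hermite machinery then enters only to identify $B=\tfrac{c_4^Gc_4^\ell}{3!}\alpha_0^3+O(\alpha_0^5)\neq 0$ for $\alpha_0=\Theta(1/\sqrt d)$, and your treatment of the gradient-only case matches the paper's. Two caveats: (i) your route requires $\ell_1$ to be weakly differentiable with $G'\ell_1'$ integrable, which is not literally contained in Assumptions~\ref{assumption1} and~\ref{assumption2} but does hold for the noisy ICA model, where $\ell_1$ is the smooth Gaussian mixture \eqref{app:likelihood}; you should state $\ell_1'\in{\rm L}^2(\R,\mathbb P_0)$ (or verify it from the explicit formula) before invoking Cauchy--Schwarz. (ii) The paper's more laborious order-by-order expansion is not wasted effort, since the explicit coefficients it produces are reused in the finite-batch analysis of \cref{thm:finite_batch_size}, where no population-level Stein identity is available.
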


\subsubsection{Finite samples: hardness of learning}

We now provide almost sharp bounds on the sample complexity of FastICA for learning the non-Gaussian feature $v$. For the analysis, we follow the recent ``one-step'' analyses~\citep{ba2022high,damian2022neural,dandi2024two} and study the overlap of the spike with the weight obtained from performing a single step of FastICA with a large batch of $n=d^\theta$ samples. More precisely, we provide a lower bound (\textcolor{blue}{positive results}) for the amount of signal that can be weakly recovered given a number of samples on the order of $d^{k^*}$, where $k^*$ is the information exponent of the loss, and precisely $k^*=4$ for the noisy ICA model.
In the data-scarce regime where $n = o(d^{k^*})$, we provide upper bounds ({\textcolor{red}{negative results}) on the overlap $\alpha = w \cdot v$ that can be achieved in a single step of FastICA. We distinguish two cases: either there is no improvement with respect to the random initialisation, or the upper bound on the
amount of signal learned is dimensionality-dependent, meaning that $n=\Theta(d^{k^*-1})$ are still not sufficient to exit from the search phase in the high-dimensional limit. Yet this regime smoothly bridges the situation of total ignorance, where nothing is learnt at all, and recovery of the spike in one step, thanks to a continuous dependence on the parameter $\delta$.

\begin{theorem}(Finite batch size)
\label{thm:finite_batch_size}
    Let $k^*$ be the information exponent of the population loss \eqref{eq:loss_ica}.
    Consider a number of samples $n = \Theta(d^{k^* - \delta})$, for $\delta \in [0,2]$.
    Set $\alpha_1 := w_1 \cdot v$, where $w_1$ is the updated weight vector given by the first iteration of FastICA.
    Then, 
    \begin{equation*}
    \begin{cases}
    \delta \in (1, 2] \quad & \textcolor{red}{\boldsymbol{\Rightarrow}} \quad \alpha_1^2 = O\left( \frac{1}{d} \right),\\
    \delta \in (0, 1] \quad & \textcolor{red}{\boldsymbol{\Rightarrow}} \quad \alpha_1^2 = O\left( \frac{1}{d^{\delta}} \right),\\ 
    \delta = 0 \quad & \textcolor{blue}{\boldsymbol{\Rightarrow}} \quad \alpha_1^2 \geq 1 - o(1).
\end{cases}
\end{equation*}
\end{theorem}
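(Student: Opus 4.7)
Start from the uniform initialisation $w_0 \sim \mathrm{Unif}(\mathbb{S}^{d-1})$, so that $\alpha_0 := w_0\cdot v$ is of order $1/\sqrt{d}$ with high probability. Conditional on $w_0$, decompose the pre-normalisation update as
\[
\widetilde{w}_1 \;=\; \mu(w_0) \;+\; \xi(w_0),
\]
where $\mu(w_0) := \mathbb{E}[\widetilde{w}_1\mid w_0]$ is the population part and $\xi(w_0)$ is the finite-sample fluctuation (a centred average of $n$ i.i.d.\ terms). Splitting $\xi(w_0) = \xi_{\parallel}\, v + \xi_{\perp}$ with $\xi_\perp \perp v$, the normalised overlap reads
\[
\alpha_1^2 \;=\; \frac{\bigl(\mu(w_0)\cdot v + \xi_{\parallel}\bigr)^2}{\bigl(\mu(w_0)\cdot v + \xi_{\parallel}\bigr)^2 + \|\xi_{\perp}\|^2}.
\]
The plan, in the spirit of the recent one-step analyses, is to compute $\mu(w_0)\cdot v$ exactly and to upper-bound $\xi_\parallel$ and $\|\xi_\perp\|$ by concentration.

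\textbf{Signal analysis.} Under the spiked cumulant model~\eqref{eq:spiked_cumulant}, $x$ decomposes as $x = y\, v + x_\perp$, with $y := v\cdot x$ non-Gaussian and $x_\perp$ Gaussian on $v^\perp$ and independent of $y$. Applying Stein's identity to the Gaussian part gives
\[
\mathbb{E}\!\left[x\, G'(w_0\cdot x)\right] \;=\; \mathbb{E}\!\left[y\, G'(w_0\cdot x)\right] v \;+\; \mathbb{E}\!\left[G''(w_0\cdot x)\right] (w_0 - \alpha_0 v),
\]
and the FastICA regulariser exactly cancels the transverse piece, leaving $\mu(w_0) = m(\alpha_0)\, v$ with $m(\alpha_0) := \mathbb{E}[y\, G'(w_0\cdot x)] - \alpha_0\, \mathbb{E}[G''(w_0\cdot x)]$. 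Since $w_0\cdot x = \alpha_0 y + u$ with $u \sim \mathcal{N}(0, 1-\alpha_0^2)$ independent of $y$, I Taylor-expand $G'$ and $G''$ in $\alpha_0 y$, insert the Hermite representation of $\ell$ (which, by definition of $k^*$ and the whitening, satisfies $c_k^\ell = 0$ for $1\le k < k^*$), and use the orthogonality of Hermite polynomials to obtain
\[
m(\alpha_0) \;=\; \frac{c_{k^*}^{G}\, c_{k^*}^{\ell}}{(k^*-1)!}\, \alpha_0^{k^*-1}\bigl(1+o(1)\bigr),
\]
so that $|\mu(w_0)\cdot v|^2 = \Theta\bigl(d^{-(k^*-1)}\bigr)$ with high probability over $w_0$.

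\textbf{Fluctuations and conclusion.} Conditional on $w_0$, $\xi$ is a centred average of $n$ i.i.d.\ terms with bounded moments. Second-moment bounds, using boundedness of $G',G''$ (Assumption~\ref{assumption2}) together with $\mathbb{E}\|x\|^2 = d$, give $\mathrm{Var}(\xi_\parallel\mid w_0) \lesssim 1/n$ and $\mathbb{E}[\|\xi_\perp\|^2 \mid w_0] \lesssim d/n$; a Chebyshev or Bernstein argument upgrades them to $|\xi_\parallel| \lesssim n^{-1/2}$ and $\|\xi_\perp\| \lesssim \sqrt{d/n}$ up to logarithmic factors, with high probability. Substituting $n = \Theta(d^{k^*-\delta})$ into the formula for $\alpha_1^2$ separates the three regimes: when $\delta\in(1,2]$, the parallel noise $\xi_\parallel^2\sim 1/n$ beats the signal $(\mu\cdot v)^2\sim d^{-(k^*-1)}$ and the denominator is dominated by $\|\xi_\perp\|^2\sim d/n$, giving $\alpha_1^2 = O(1/d)$; when $\delta\in(0,1]$, the signal dominates $\xi_\parallel^2$ but the denominator is still ruled by $\|\xi_\perp\|^2$, yielding $\alpha_1^2 = O\bigl((\mu\cdot v)^2/\|\xi_\perp\|^2\bigr) = O(n/d^{k^*}) = O(d^{-\delta})$; finally, when $\delta=0$ and the constant in $n = \Theta(d^{k^*})$ is taken large enough, $\|\xi_\perp\|^2$ becomes much smaller than $(\mu\cdot v)^2$ and $\alpha_1^2 \ge 1-o(1)$.

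\textbf{Main obstacle.} The central difficulty is to extract the exact power $\alpha_0^{k^*-1}$ of the signal despite the Gaussian smoothing induced by the transverse component of $w_0$ inside $G'(w_0\cdot x)$. This requires carefully propagating the joint Hermite expansions of $G$ and $\ell$ through the FastICA iteration without losing the cancellations produced by the whitening matrix $S$ (which is what raises the information exponent to $k^*=4$). The fluctuation analysis is more routine, but making the bounds uniform over $w_0$ and sharp enough to deduce the $1-o(1)$ statement at $\delta=0$ still demands moment estimates that are tight in the constants, not only in the $d$- and $n$-scalings.
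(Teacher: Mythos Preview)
Your overall strategy is the same one the paper uses: compute the population part of $\widetilde w_1$, control the finite-sample fluctuations, and read off the three regimes from the ratio $\pi_1^2/\|\widetilde w_1\|^2$. Your use of Stein's lemma on the Gaussian transverse part $x_\perp$ to obtain $\mu(w_0)=m(\alpha_0)\,v$ exactly is correct and in fact cleaner than what the paper does: the paper expands $\mathbb{E}[f]=\mathbb{E}[xG'(w_0\cdot x)]$ and $\mathbb{E}[g]=\mathbb{E}[G''(w_0\cdot x)]w_0$ separately, in the $v_{\perp w_0}$/$w_0$ basis, up to sixth order in $\alpha_0$, and then verifies the cancellations by hand. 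Your observation shortcuts this and makes the role of the regulariser transparent. The signal scaling $m(\alpha_0)=\Theta(\alpha_0^{k^*-1})$ matches the paper's $\mathbb{E}[\pi_1]$.

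There is, however, a genuine gap in the fluctuation analysis that breaks the two negative results. You establish only the \emph{upper} bounds $|\xi_\parallel|\lesssim n^{-1/2}$ and $\|\xi_\perp\|\lesssim\sqrt{d/n}$. But for $\delta\in(0,2]$ you need to upper-bound
\[
\alpha_1^2 \;=\; \frac{(m(\alpha_0)+\xi_\parallel)^2}{(m(\alpha_0)+\xi_\parallel)^2+\|\xi_\perp\|^2}
\;\le\; \frac{(m(\alpha_0)+\xi_\parallel)^2}{\|\xi_\perp\|^2},
\]
and this requires a \emph{lower} bound $\|\xi_\perp\|^2=\Omega(d/n)$ with high probability, not an upper bound. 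A Chebyshev/Bernstein argument starting from $\mathbb{E}[\|\xi_\perp\|^2]\lesssim d/n$ cannot deliver that; you need both $\mathbb{E}[\|\xi_\perp\|^2]=\Theta(d/n)$ (in particular a matching lower bound on the variance of the per-sample update along a generic direction in $v^\perp$) and concentration of $\|\xi_\perp\|^2$ around its mean. This is exactly the content of the paper's computation of $\mathbb{E}[\|\widetilde w_1\|^2]$ (splitting the double sum into diagonal and off-diagonal contributions and showing the diagonal term is $\Theta(d/n)$) together with the deviation bound $|\,\|\widetilde w_1\|^2-\mathbb{E}\|\widetilde w_1\|^2\,|=o(d/n)$ imported from \citet{dandi2024two}. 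Without this two-sided control of the noise norm, your conclusions $\alpha_1^2=O(1/d)$ and $\alpha_1^2=O(1/d^\delta)$ do not follow. The positive case $\delta=0$ is fine as you wrote it, since there only the upper bound on $\|\xi_\perp\|^2$ is needed.
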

\noindent
We prove \cref{thm:finite_batch_size} in \cref{app:sec_fast}.
Even if \cref{thm:finite_batch_size} is not specific to the noisy ICA model, and instead applies to any input model and contrast function satisfying Assumptions \ref{assumption1} and \ref{assumption2}, it is instructive to look at the special case of Rademacher prior on the latent variable and standard contrast function, \cref{eq:contrasts}. In this setting, we have proved in section \ref{app:details_spiked_cumulant} in the appendix that $k^* = 4$.
Therefore, we find that $n = \Theta(d^4)$ is sufficient to perfectly recover the spike with a single step, up to contributions which vanish when $d$ goes to infinity, in line with what we see in the experiments of \cref{fig:figure2}.
Moreover, if $d^{3} \lesssim n \ll d^4$, it turns out that the overlap $\alpha_1 = w_1 \cdot v$ is bounded by a dimension-dependent constant, which explains the slow increase of the overlap in that regime. For quadratic sample complexity, the gradient in the FastICA update does not concentrate, so the algorithm does not recover the signal and remains stuck in the search phase.

\begin{proof}[Key elements of the proof]
The overall ideas of the proof are similar to those employed in the analysis of the teacher-student setup of Theorems 1 and 2 from \citep{dandi2024two}, with the following main differences. 
First of all, in our unsupervised case, instead of isotropic inputs, the data points are in general non-Gaussian distributed; as already observed in Section \ref{sec:fundamental-limits}, this impacts on the information exponent $k^*$ of the population loss.
Second, unlike for SGD, the iteration of FastICA presents both a gradient term and a regularisation term which is data-dependent (the regularisation ``constant'' depends on $G''(w \cdot x)$, and its analysis is hence just as complex as that of the gradient term). The general strategy is computing the expectations for both the signal $\widetilde{w}_1 \cdot v$ and the noise $\|\widetilde{w}_1\|$, and then identifying how much data $n$ allows for the concentration of the two quantities. To be able to compute the expectations, we exploit the orthogonality properties of Hermite polynomials proved in Corollary \ref{app:same_argument_hermite}. Once concentration of both signal and noise is established, 
the positive result corresponds to the signal dominating the noise, and vice versa for the intermediate regime in the middle. Conversely, if the signal does not concentrate, $n$ is not sufficiently large for the overlap to escape from its scaling at initialisation and first upper bound holds.
\end{proof}

\subsubsection{Discussion}%
\label{sec:fastica-theory-discussion}

The almost sharp analysis of the high-dimensional asymptotics of FastICA in\cref{thm:finite_batch_size} is our first main result, and explains theoretically why FastICA tends to struggle in high dimensions~\citep{zarzoso2006fast,auddy2024large}. While FastICA is often applied after reducing the dimension of the inputs using PCA, it should be noted that the scaling of $n\asymp d^4$ on the noisy ICA model is poor even in moderate to small dimensions. One non-standard way to escape this scaling is to use the ad-hoc initialization scheme proposed recently by \citet{auddy2024large}, which allows to find an initialisation that gives ICA a warm start with a non-trivial overlap given only $n \gtrsim d^2$ samples. However, there remains a large gap between the performance of FastICA as it is commonly used, and the $n\gtrsim d^2$ or, more generally speaking, $n \gtrsim d^{k^*/2}$ bounds we have from low-degree methods, see \citep{damian2024computational}.
One may therefore ask whether other algorithms, and in particular the simple stochastic gradient descent, may be more efficient at performing ICA, not least since neural networks trained with stochastic gradient descent seem to learn ICA-like filters in their first layer rapidly. 



  
\subsection{Vanilla SGD}%
\label{sec:vanilla-sgd}

In order to take a closer look at feature extraction in the context of deep neural networks, we analyse SGD as an alternative algorithm to perform ICA.
Consider a set of $n$ centered and whitened data points $\mathcal{D}= \{x^{\mu}\}_{\mu=1}^n$. We sample a new data point at each step.
For a suitable learning rate $\delta >0$, each iteration of the spherical online SGD, for $w_0 \sim \text{Unif}(\mathbb{S}^{d-1})$, is defined as
\begin{align*}
        \begin{cases}
        \widetilde{w}_t = w_{t-1} + \frac{\delta}{d}\,
        \nabla_{\text{sph}} \mathcal{L}(w,x_{t}) \bigl|_{w = w_{t-1}} 
        \quad\quad t \geq 1,\\
        w_t = \widetilde{w}_t/\|\widetilde{w}_t\|,
    \end{cases}
\end{align*}
The spherical gradient $\nabla_{\text{sph}}$ for a function $f$ is given by \mbox{$\nabla_{\text{sph}} f(w,\cdot) := (\mathds{1}_d - w w^{\top})\nabla_w f(w, \cdot)$}.
In the case of the noisy ICA model model \eqref{eq:spiked_cumulant}, we can directly apply the analysis performed by \citet{benarous2021online}, which guarantees that if $n = \Omega(d^3 \log^2 d)$, or more in general \mbox{$n = \Omega(d^{k^*-1} \log^2 d)$}, 
the spike is ``strongly'' recovered, meaning that $w_n \cdot v \to 1$ in probability for $n$ going to infinity. 
Moreover, if $n = O(d^3)$, the spike cannot be recovered, i.e.\ $\sup_{t \leq
  n}\lvert v \cdot w_t \rvert \to 0$ in probability.

Hence, vanilla online SGD is already faster that FastICA in the large-batch setting. However, given the information-theoretic results of \cref{sec:fundamental-limits}, online SGD does not achieve the optimal performance and there remains a statistical-to-computational gap.

\subsection{Closing the statistical-to-computational gap: smoothing the
  landscape}%
\label{subsec:smoothing}

We now show that the statistical-to-computational gap of SGD on noisy ICA can be
closed by running online spherical SGD on a \textit{smoothed loss} following the
approach of \citet{damian2023smoothing}:
\begin{definition}\citep{damian2023smoothing}\label{def:loss_smoothed}
    For $\lambda \geq 0$ and \mbox{$x \in \mathbb{R}^d$}, the
    smoothed loss function is 
    \begin{math}
        L_{\lambda}(w,x) := \mathcal{L}_ {\lambda}[G(w \cdot x)], 
    \end{math}
    where the smoothing operator $\mathcal{L}_{\lambda}$ reads
    \begin{small}
    \begin{equation*}
        \mathcal{L}_ {\lambda}[G(w \cdot x)] := \mathbb{E}_{z \sim \mu_w} \left[ G \left( \dfrac{w + \lambda z}{\| w + \lambda z \|} \cdot x \right)\right],
    \end{equation*}
    \end{small}
    if $\mu_w$ is the uniform distribution over $\mathbb{S}^{d-1}$ conditioned on being orthogonal to $w$. 
\end{definition}

The key intuition behind the smoothing operator is that, thanks to large
$\lambda$ (than scales with the input dimension), it allows to evaluate the loss
function in regions that are far from the iterate weight $w_t$, collecting
non-local signal that alleviates the flatness of the saddle around $\alpha=0$ of
$\mathcal L$, reducing the length of the search phase; we illustrate the effect
of smoothing on the loss in \cref{fig:smoothing lambda}. For implementation
purposes, we give an explicit formula for the smoothed gradient of said loss in
\cref{app:smoothing}. Note that the ``large steps'' allowed by a large $\lambda$
help in escaping the search phase.

\begin{figure*}[t!]
  \centering
  \includegraphics[height=.245\linewidth, width=.33\linewidth]{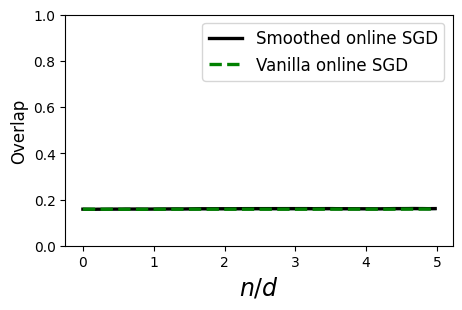}%
  \includegraphics[ width=.32\linewidth]{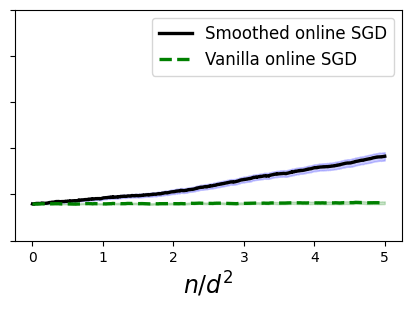}%
  \includegraphics[width=.32\linewidth]{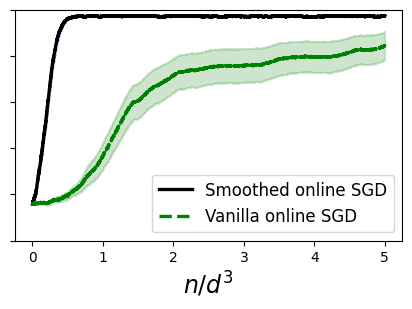}%
  \caption{\label{fig:smooth} \textbf{Performances of smoothed online SGD and vanilla SGD on the noisy ICA model, for $G(s)=h_4(s)$ and \mbox{$G(s)= -\exp(-s^2/2)$} respectively.} \textbf{Left:}
    In a linear time, nor smoothed and vanilla SGD recover the spike. \textbf{Middle:} In the quadratic regime, smoothed SGD recovers, although vanilla SGD does not.
    \textbf{Right:} In the cubic regime, both smoothed and vanilla SGD recover the spike. Experimental details in
    \cref{app:details-smooth}.}
\end{figure*}

\citet{damian2023smoothing} showed that smoothing the loss for a single neuron $\sigma(w \cdot x)$ with activation function $\sigma: \reals \to \reals$ trained on a supervised task with Gaussian inputs and labels provided by a teacher neuron $\sigma(v \cdot x)$ reduces the sample complexity of weak recovery for online SGD from $d^{k^*-1}$~\citep{benarous2021online} down to the optimal $d^{k^*/2}$. Their analysis crucially relies on the fact that teacher and student have the same activation function $\sigma$. For ICA, one can roughly think of the contrast function as the student activation function, which will in general have a different information exponent from the loss function, which depends also on the likelihood ratio. We therefore generalise the analysis of \citet{damian2023smoothing} to this mismatched case and show that the dynamics of smoothed SGD on ICA is governed by the interplay 
of the information exponents of the \textcolor{red}{population loss} and the \textcolor{blue}{contrast function} respectively, called $k^*_1$ and $k^*_2$, and that the optimal sample complexity is reached if and only if $k^*_1 = k^*_2$.
This fact follows immediately 
from the heuristic analysis performed in \cref{sec:heuristic}, where it is shown that for the optimal $\lambda = d^{1/4}$, the ODE describing the behavior of the overlap $\alpha$ at the beginning of learning is
\begin{equation}\label{eq: ode}
    \alpha'(t) = \dfrac{\alpha(t)}{d^{\textcolor{red}{\boldsymbol{k^*_1}} \, - \, \textcolor{blue}{\boldsymbol{k^*_2}}/2}}.
\end{equation}

On the noisy ICA model \eqref{eq:spiked_cumulant}, for which $k^*_1 = 4$, we thus find the following picture.
For the standard contrast functions (\cref{eq:contrasts}), $k^*_1 \neq k^*_2 = 2$, and then smoothed online SGD 
does not improve the sample complexity of vanilla online SGD: the spike is still recovered in a cubic time. In contrast, if $k^*_2=4$, i.e. the information exponents of the loss and the contrast function are equal, smoothed online SGD is able to recover the spike in a quadratic time, which means that smoothing the landscape reduces the sample complexity of online SGD down to the optimal threshold suggested by low-degree methods. Even if it would be clearly necessary a fine-tuning of the contrast function in a data-dependent way which cannot be performed in practice, we observe that in our case the fourth-order Hermite polynomial $G(s)= h_4(s)$, which for whitening data is nothing but the excess kurtosis $G(s) = s^4 -3$, is indeed an optimal contrast functions. Remarkably, the kurtosis is the most classical measure of non-Gaussianity that one can find in the literature (see e.g. \citet{hyvarinen2000independent}) and it has been employed to study recent developments of FastICA in \citep{auddy2024large}.

\subsubsection{Heuristic derivation}\label{sec:heuristic}

We first heuristically derive the sample complexities of Theorem \ref{thm: smooth}, following \citet{damian2023smoothing}. We will use Corollary \ref{app:remark_orth} to deal with expectations of multiple products of Hermite polynomials appearing due to non-Gaussian inputs.
A crucial quantity is the signal-to-noise SNR since, as shown in \citet{damian2023smoothing}, at the beginning of leaning the overlap $\alpha := w \cdot v$ is given by the ODE 
$\alpha'(t) = \text{SNR} / \alpha(t)$ with
\begin{equation*}
    \text{SNR} := \dfrac{\mathbb{E}[g \cdot v]^2}{\mathbb{E}[\|g \|^2]},
\end{equation*}
where $g$ is the online spherical gradient
\begin{math}
    g := \nabla_{\text{sph}} L_{\lambda} (x, w).
\end{math}
We compute how the SNR scales with the dimension of the inputs $d$.
\paragraph{Signal} Recall the scaling computed by \citet{damian2023smoothing} for the signal in the case of an even contrast function: 
\begin{equation*}
    \mathbb{E}[g \cdot v] = \Theta(
    \alpha \, d^{-(\textcolor{red}{\boldsymbol{k^*_1}}-2)/2} /\lambda^2),
\end{equation*}
when $\alpha \leq \lambda d^{-1/2}$.
Note that it depends on the information exponent $k^*_1$ of the loss.
\paragraph{Noise} We look at the scaling of the noise
$\mathbb{E}[\| g\|]^2$.  By definition,
\begin{math}
    g = \Theta \Bigl( \lambda^{-1}x \,\mathcal{L}_{\lambda}(G'(w \cdot x)) \Bigr).
\end{math}
Since with high probability $\|x\|^2 = O(d)$, we study the second moment of
$s := \mathcal{L}_{\lambda}(G'(w \cdot x))$. By exploiting the definition of the
smoothing operator and writing the expectation in terms of the likelihood ratio,
we get that, for two replicas $z, z' \sim \mu_w$,
\begin{small}
\begin{align*}
    \mathbb{E}\left[s^2\right] & = \mathbb{E}\left[ \mathop{\mathbb{E}}_{z,z'} \left[ G'\left( \dfrac{w + \lambda z}{\sqrt{1 + \lambda^2}} \cdot x \right) G'\left( \dfrac{w + \lambda z'}{\sqrt{1 + \lambda^2}}\cdot x \right)\right] \right] \\[3pt]
    & = \mathop{\mathbb{E}}_{z,z'} \Bigl(\, \mathop{\mathbb{E}}_{\mathbb{P}_0}\Bigl[ G'\Bigl( \underbrace{\dfrac{w+\lambda z}{\sqrt{1 + \lambda^2}}}_{w_1} \cdot \, x  \Bigr) G'\Bigl( \underbrace{\dfrac{w + \lambda z'}{\sqrt{1+ \lambda^2}}}_{w_2}\cdot x \Bigr) \ell(v \cdot \, x)\Bigr]\Bigr).
\end{align*}
\end{small}
\noindent
By expanding $G'$ and $\ell$, we get a Gaussian expectation of a triple product of Hermite polynomials, whose scaling can be computed thanks to \cref{multi_orth}, recalling that $G$ is even and then $c_0^{G'}=0$. In particular, thanks to Corollary \ref{app:remark_orth}, we have
\begin{equation*}
    \mathbb{E}_{\mathbb{P}_0} \left[ G'(w_1 \cdot x) G'(w_2 \cdot x) \ell(v \cdot x)\right] =  \Theta \Bigl((w_1 \cdot w_2)^{k^*_2 - 1} \Bigr)
\end{equation*}
Since $z \cdot z' = \Theta(1/\sqrt{d})$, we can avoid the dependence on the replicas by simply choosing $\lambda \ll d^{1/4}$. This choice corresponds to \cref{choice_lambda} in Appendix \ref{app:smoothing}.
Hence, 
\mbox{
\begin{math}
    w_1 \cdot w_2 = \Theta(\lambda^{-2}). 
\end{math}}
We can conclude that the noise scales as
\begin{equation*}
    \mathbb{E}[\| g \|^2] = \Theta(d \lambda^{-2 \textcolor{blue}{\boldsymbol{k^*_2}}}).
\end{equation*} 
By combining signal and noise, we have that
\begin{equation*}
    \text{SNR} = \dfrac{\alpha^2}{\lambda^4} d^{-(\textcolor{red}{\boldsymbol{k^*_1}} - 1)} \dfrac{1}{ d \lambda^{-2 \textcolor{blue}{\boldsymbol{k^*_2}}}} = \alpha^2 \dfrac{\lambda^{2(\textcolor{blue}{\boldsymbol{k^*_2}}-2)}}{d^{\textcolor{red}{\boldsymbol{k^*_1}}-1}}.
\end{equation*}
This implies that, in the case of the optimal $\lambda = d^{1/4}$, the ODE describing the behavior of the overlap $\alpha$ for small $t \geq 0$ is
nothing but \cref{eq: ode}.

\begin{figure*}[t!]
  \centering
  \includegraphics[trim=0 150 0 500,clip,width=\linewidth]{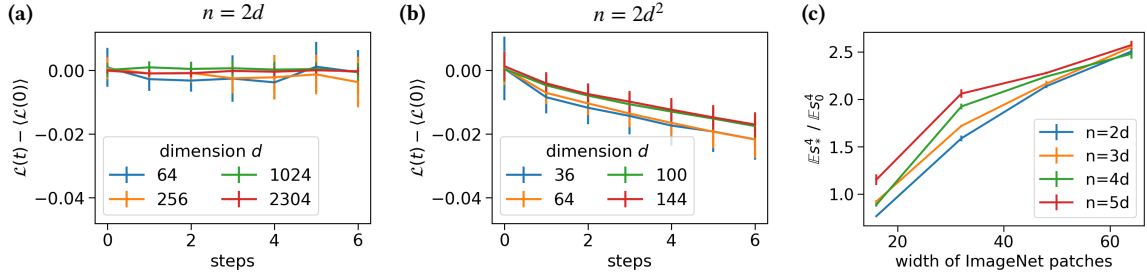}
  \caption{\label{fig:fastica-imagenet} \textbf{Performance of FastICA on
      ImageNet.} We plot the difference between the test loss $\mathcal{L}(t)$
    at step $t$, and the test loss $\langle \mathcal{L}(0) \rangle$ at
    initialisation averaged over twenty initialisations. At linear sample
    complexity \textbf{(a)}, the test loss remains close to its initial values,
    but it clearly decreases at quadratic sample complexity
    \textbf{(b)}. \textbf{(c)} Ratio of the fourth moment of projections $s_*$
    of ImageNet patches of the given width along a Gabor filter obtained via
    FastICA, and of the fourth moment of projections $s_0$ along a random
    direction. Different colours refer to the size of the training set from
    which the Gabor filters were obtained. Full experimental details in
    \cref{app:details-real}.}
\end{figure*}

Therefore, the number of data points required to reach an order-one overlap is $d^{k^*/2}$, i.e. the optimal one according to the fundamental limits of recovery, if and only if \mbox{$k_1^* = k^*_2$} and than we have heuristically derived the results of \cref{thm: smooth}.
The following theorem, which we prove in \cref{app:smoothing}, makes rigorous the heuristics. It holds for any input model satisfying \cref{assumption1}, \ref{assumption2}, and the following \cref{assumption3}:
\begin{assumption} \label{assumption3}
    We assume $\ell\in {\rm L}^{\infty}(\mathbb{R}^d, \mathbb{P}_0)$ and that the loss \eqref{eq:loss_ica} is strictly monotonically increasing in $\alpha = w \cdot v$.
\end{assumption}

\begin{theorem}[Escaping mediocrity]\label{thm: smooth}
    Define $k_1^*$ and $k_2^*$ as the information exponents of the population loss \eqref{eq:loss_ica} and the contrast function, respectively. Assume that $w_0 \sim \text{Unif}(\mathbb{S}^{d-1})$ and that the initial overlap $\alpha_0 := w_0 \cdot v $ is such that $ \alpha_0 \gtrsim 1/\sqrt{d}$. Consider $\lambda \in [1,d^{1/4}]$.
    Then, there exist $\eta \in \mathbb{R}$ and $n \in \mathbb{N}$, satisfying
    \begin{equation*}
    \begin{cases}
        n = O \left( d^{k^*_1 -1} \lambda^{-2 (k^*_2 -2)} \;{\rm polylog}(d) \right),\\ 
        \eta = O \left( d^{- k^*_1/2} \lambda^{2 (k^*_2 -1)} \;{\rm polylog}(d) \right)
    \end{cases}
    \end{equation*}
    such that $\alpha_{n} \geq 1 - d^{-1/4}$ with high probability. 
    Here, $\alpha_n := w_n \cdot v$ and $w_n$ is the estimator given by smoothed online SGD with $\eta_n = \eta$ and $\lambda_n = \lambda$.
\end{theorem}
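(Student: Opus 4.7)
The plan is to adapt the smoothed-SGD analysis of \citet{damian2023smoothing} to the unsupervised ICA setting, the main new feature being that the information exponent $k_1^*$ of the population loss and the information exponent $k_2^*$ of the contrast function need not coincide. I would split the argument into a \textbf{search phase}, in which $\alpha_t := w_t\cdot v$ grows from $\Theta(1/\sqrt d)$ up to the threshold $\alpha \asymp \lambda/\sqrt d$, and a \textbf{descent phase}, in which \cref{assumption3} forces the population gradient to be signed so that a deterministic Lyapunov argument drives $\alpha_t$ to $1 - d^{-1/4}$ in an additional $O(\mathrm{polylog}(d))$ steps that do not affect the overall sample complexity. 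In both phases the recursion is rewritten as a conditional drift plus a bounded martingale increment, after controlling the normalisation $\|\widetilde w_t\|$ to leading order.

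For the search phase the main step is to upgrade the heuristic scalings of \cref{sec:heuristic} into rigorous one-step drift and variance estimates. Rewriting $\mathbb{E}_\mathbb{P}[\,\cdot\,]$ as $\mathbb{E}_{\mathbb{P}_0}[\,\cdot\,\ell(v\cdot x)]$, inserting Hermite expansions of $G'$ and $\ell$, and applying \cref{app:remark_orth} for triple products of Hermite polynomials yields, on the event $\alpha_t \lesssim \lambda/\sqrt d$,
\begin{equation*}
  \mathbb{E}[g_t\cdot v \mid w_t] = \Theta\!\left(\tfrac{\alpha_t}{\lambda^2}\,d^{-(k_1^*-2)/2}\right), \qquad \mathbb{E}[\|g_t\|^2 \mid w_t] = \Theta\!\left(d\,\lambda^{-2k_2^*}\right),
\end{equation*}
where $g_t := \nabla_{\mathrm{sph}} L_\lambda(w_t,x_t)$. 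The drift inherits $k_1^*$ because only the pairing $c_{k_1^*}^{G'} c_{k_1^*-1}^{\ell}$ survives Gaussian integration against $v\cdot x$; the noise is a second moment of $G'$ alone and collects $k_2^*$ via $\mathbb{E}_{\mathbb{P}_0}[G'(w_1\cdot x)G'(w_2\cdot x)] \sim (w_1\cdot w_2)^{k_2^*-1}$ evaluated at two independent replicas $z,z'\sim\mu_w$. The stated $\eta$ and $n$ are then the smallest values for which, respectively, the per-step signal dominates the per-step fluctuation and the deterministic flow $\alpha'(\tau) = \alpha(\tau)\,\lambda^{2(k_2^*-2)}/d^{k_1^*-1}$ reaches the exit threshold from $\alpha_0 \gtrsim 1/\sqrt d$ (which holds with high probability for $w_0 \sim \mathrm{Unif}(\mathbb{S}^{d-1})$).

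To convert these one-step estimates into a pathwise statement I would apply Freedman's inequality to the martingale part of the $\alpha_t$ recursion, together with a union bound over $t\le n$, showing that $\alpha_t$ stays within a $\mathrm{polylog}$ factor of the deterministic flow until it leaves the search region. Once $\alpha_t \gtrsim \lambda/\sqrt d$, \cref{assumption3} and boundedness of $\ell$ imply that the population spherical gradient is uniformly lower bounded on $[\lambda/\sqrt d,\,1-d^{-1/4}]$, so standard concentration closes the descent phase. The main technical obstacle is the noise calculation in the mismatched regime: one must verify that in $\mathbb{E}_{\mathbb{P}_0}[G'(w_1\cdot x)G'(w_2\cdot x)\ell(v\cdot x)]$ the dominant contribution is genuinely $(w_1\cdot w_2)^{k_2^*-1}$ rather than a cross term of the form $(w_i\cdot v)^{j}(w_1\cdot w_2)^{k_2^*-1-j}$ that would blow up while $\alpha_t$ is still small. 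The constraint $\lambda \in [1, d^{1/4}]$ is what makes this work, because it forces the replica overlap $z\cdot z' = O(d^{-1/2})$ to be subdominant to $\lambda^{-2}$ and pins down the leading Hermite index; checking this carefully for every surviving triple of Hermite indices, and tracking the lower-order remainders from normalising $\widetilde w_t$, will account for most of the technical work.
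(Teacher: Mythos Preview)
Your proposal is correct and follows the same overall architecture as the paper's proof: establish one-step signal and noise estimates for the smoothed spherical gradient, package them into an update of the form $\alpha_{t+1} = \alpha_t + \eta(1-\alpha_t^2)c_\lambda(\alpha_t) + Z_t + O(\eta^2\mathbb{E}[\|g_t\|^2])$ (the paper's analogue of Lemma~15 in \citet{damian2023smoothing}), and then run the Ben~Arous--Gheissari--Jagannath style martingale argument (the paper simply cites Lemmas~16--17 of \citet{damian2023smoothing} rather than spelling out Freedman plus union bound, but that is what those lemmas do).

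The one place where you take a genuinely harder road is the noise bound. You propose to control $\mathbb{E}[\|g\|^2]$ by computing $\mathbb{E}_{\mathbb{P}_0}[G'(w_1\cdot x)G'(w_2\cdot x)\,\ell(v\cdot x)]$ via the triple-product Hermite formula of \cref{app:remark_orth}, and you correctly flag the technical headache this creates: one has to rule out cross terms in $(w_i\cdot v)^j$ and check all surviving index triples. The paper sidesteps this entirely by invoking \cref{assumption3}, namely $\ell\in L^\infty(\mathbb{R}^d,\mathbb{P}_0)$, to write
\[
\mathbb{E}_{\mathbb{P}}[(u\cdot g)^2] = \mathbb{E}_{\mathbb{P}_0}[(u\cdot g)^2\,\ell(v\cdot x)] \le \|\ell\|_\infty\,\mathbb{E}_{\mathbb{P}_0}[(u\cdot g)^2],
\]
which reduces the noise bound to the purely Gaussian case already treated in Corollary~1 of \citet{damian2023smoothing}, giving $\mathbb{E}[\|g\|^2]\lesssim d(1+\lambda^2)^{-k_2^*}$ in one line. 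Your route yields the same scaling and is in fact what underlies the heuristic of \cref{sec:heuristic}, but for the rigorous proof the $L^\infty$ shortcut is both shorter and what makes \cref{assumption3} earn its keep; it also removes the need to worry about the cross terms you identify as the main obstacle.
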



\section{The search phase of FastICA on images}%
\label{sec:imagenet-experiments}

We finally investigate whether FastICA also exhibits an extended search phase on
real data. To that end, we ran FastICA on patches sampled at random from the
ImageNet data set~\citep{deng2009imagenet}. In \cref{fig:fastica-imagenet}, we
show the difference between the test loss $\mathcal{L}(t)$ at step $t$ of
FastICA and the loss $\langle \mathcal{L}(0) \rangle$ averaged over twenty
randomly chosen initial conditions. We find that at linear sample complexity
when $n=2d$, \cref{fig:fastica-imagenet}(a), FastICA is clearly stuck in a
search phase: the loss does not deviate significantly from its value at
initialisation. Meanwhile in the quadratic regime at $n=2d^2$,
\cref{fig:fastica-imagenet}(b), FastICA does recover a non-trivial Gaussian
direction from the start, faster than our theory would suggest.

Where does this non-Gaussian signal come from that FastICA picks up in the
images? A natural idea would be to suggest that it comes from the third-order
cumulant, which is non-zero for real images in contrast to the spiked cumulant
with Rademacher latent $\nu$. The corresponding information exponent $k^*=3$
would then suggest that $n \asymp d^{k^*-1}$ is the transitionary regime, where
recovery is possible in finite dimensions. However, since the contrast functions
used in FastICA are symmetric, FastICA is blind to information carried by odd
cumulants. We demonstrate this explicitly by constructing a latent distribution
which has mean zero, unit variance, a non-trivial third-order cumulant, but zero
fourth-order cumulant, and find that FastICA does not pick up any signal, see
\cref{fig:fastica-3not4}. Another potential explanation is that the latent
variables of images are super-exponential, and follow an approximate Laplace
distribution~\citep{wainwright1999scale}. However, we verified experimentally
that when the latent variable $\nu^\mu$ in the noisy ICA model has a Laplace
distribution, \cref{thm:finite_batch_size} still accurately describes the sample
complexity required for recovery, see \cref{fig:fastica_laplace}. However, the
longer tail induces stronger finite-size fluctuations, which might contribute to
the success of FastICA at quadratic sample complexities. 

Indeed, we found strong finite-size effects when attempting to measure the
effective signal-to-noise ratio of the non-Gaussian directions in the images. In
the spiked cumulant model, an empirical measure of signal-to-noise ratio can be
obtained by dividing the fourth moment of the projection of inputs along the
spike, $s_* = v\cdot x$, with the fourth moment of inputs projected along a
random direction $s_0 = w_0 \cdot x$. As a surrogate for the ``spike'' in real
images, we ran FastICA to obtain clean Gabor filters from $n=2,3,4,5d$ ImageNet
patches (as detailed in \cref{app:details-fastica}) and computed the ratio of
the fourth moment of projections $s_*$ along these filters with projections
$s_0$ along random directions. For the spiked cumulant with standard Laplace
prior, this ratio is equal to 2. In \cref{fig:fastica-imagenet}(c), we see that
this ratio for images tends towards an larger value, but we note that the
empirical estimate only converges slowly with width, hinting at important
finite-size fluctuations.

\section{Concluding perspectives}

Despite their simplicity, Independent Component Analysis yields similar filters
as the early layers of deep convolutional neural networks, and therefore offers
an interesting model to study feature learning from non-Gaussian inputs. Here,
we considered the problem of recovering a single feature encoded in the
non-Gaussian input fluctuations and established almost sharp sample complexity
thresholds for FastICA and SGD. Our analysis revealed the poor sample complexity
of FastICA, which might explain some of its problems in high-dimensional
settings, while our experiments suggest that images have strong enough
non-Gaussian features to compensate for this. Going forward, it will be
intriguing to extend our analysis to models with several non-Gaussian
directions, along the lines of \citet{dandi2024two} and
\citet{bardone2024sliding}. Finally, developing a model for synthetic data from
which localised, oriented filters can be learnt looms as an intriguing
challenge~\citep{ingrosso2022data, lufkin2024nonlinear}.

\section*{Acknowledgements}

We thank Mitya Chklovskii for valuable discussions. SG acknowledges funding from
the European Research Council (ERC) under the European Union’s Horizon 2020
research and innovation programme, Grant agreement ID 101166056, and funding
from the European Union--NextGenerationEU, in the framework of the PRIN Project
SELF-MADE (code 2022E3WYTY -- CUP G53D23000780001). SG and FR acknowledge
funding from Next Generation EU, in the context of the National Recovery and
Resilience Plan, Investment PE1 -- Project FAIR ``Future Artificial Intelligence
Research'' (CUP G53C22000440006).

\clearpage

\printbibliography


\newpage
\appendix
\onecolumn
\numberwithin{equation}{section}
\numberwithin{figure}{section}

\section{Experimental details}

In this appendix, we collect detailed information on how we ran the various
experiments of this paper.

\subsection{\Cref{fig:figure1}}%
\label{app:details-figure1}

\begin{description}
\item[(a)] We plot the filters of the first convolutional layer of an AlexNet
  trained on ImageNet using mini-batch size of 128, momentum of 0.9, weight
  decay of $10^{-4}$, cosine learning rate schedule with initial learning rate
  of $10^{-2}$. We also trained a DenseNet121~\citep{huang2017densely} and a
  ResNet18~\citep{he2016deep} using the same recipe, but with larger initial
  learning rate $10^{-1}$.
\item[(b)] The Gabor filters were obtained with online SGD performed on 100 000
  patches of size $64 \times 64$ extracted from ImageNet.  We have decided to
  perform ICA with SGD to use the typical algorithm employed to train deep
  neural networks. We perform PCA before running ICA by keeping 600 principal
  components.  The contrast function was $G(s)=-\exp(-s^2/2)$.
\item[(c)] We applied principal component analysis to the same ImageNet patches
  to obtain the leading principal components, and plot the eight leading ones
  from left to right, and top to bottom.
\item[(d)] For all three networks trained on ImageNet, we took 50
  logarithmically spaced snapshots of the weights during training. For each
  snapshot, we extracted the weights for each first-layer convolution and
  computed the dot-products $s^k_\mu = w^k \cdot x^\mu$ between the weight
  of the $k$th convolution for randomly sampled patches from ImageNet
  $x^\mu$. We normalised the $s^k_\mu$ for each neuron (fixed $k$) to have
  variance 1, and computed the excess kurtosis of the resulting
  $s^k_\mu$. We then averaged over all the neurons to obtain the curves
  shown in \cref{fig:figure1}(b).
\item[(e)] The snapshots were taken from one of the AlexNet models.
\end{description}

\subsection{\Cref{fig:figure2}}%
\label{app:details-fastica}
Here we show the performance of FastICA on the spike cumulant model \eqref{eq:spiked_cumulant} in three different regimes, in the case of $d = 50$ and $\beta = 15$. The spike $v$ has been sampled uniformly in the sphere $\mathbb{S}^{d-1}$. The regimes correspond to the batch-sizes of $n=d^2, n=d^{3+\delta}$ and $n=d^4$, respectively, for $\delta = 0.2$. Each batch contains $n$ fresh data points sampled from the distribution of the inputs $\mathbb{P}$.
The contrast function is $G(s)=-\exp(-s^2/2)$. We display on the $y$-axis the overlap $w_t \cdot v$ for $t=0, \dots, 4$ (on the $x$-axis), where $w_t$ is the updated weight vector provided by FastICA in the first four step, plus initialisation $w_0 \sim \mathbb{S}^{d-1}$ under the constraint that $w_0 \cdot v = 1/\sqrt{d}$. An average over 15 runs has been taken.

\subsection{\Cref{fig:smooth}}%
\label{app:details-smooth}
Here we show the velocity for recovering the spike of smoothed online SGD in three different regimes: linear, quadratic and cubic. The noisy ICA model is considered. 
We use $d=40$, $\beta=15$ and a total number of data points of $n = 5 d^3$. The parameter $\lambda$ is the optimal one, i.e. $\lambda = d^{1/4}$. The learning rate is $\eta = 2.9 d^{(-k^*_1/2)} \lambda^{(2 k^*_2-2)}$, where $k^*_1 = 4$ and $k_2^* = 2$, since $G(s)=-\exp(-s^2/2)$. The average is made over 30 runs.

\subsection{\Cref{fig:fastica-imagenet}}%
\label{app:details-real}

\begin{description}
\item[(a) and (b)] We perform ICA on patches of ImageNet images with increasing
  dimensions, and randomly drawn. The chosen contrast function is
  \mbox{$G(s) = -\exp(-s^2/2)$}. The standard centering and whitening
  pre-processing procedure is performed before running the algorithm.
\item[(c)] We first ran FastICA on $n=2,3,4,5d$ ImageNet image patches to obtain
  clean Gabor filters, which we used as surrogates of the spike $v$. To that
  end, we used the ``canonical preprocessing'' of \citet{hyvarinen2009natural},
  meaning we whitened inputs and ran the FastICA algorithm only in the subspace
  spanned by the top $k$ principal components (here, $k=50$ for all widths). We
  then computed the fourth moment of projections $s_*$ of an independent test set of
  ImageNet images along these Gabor filters, and divided this by the fourth
  moment of projections along random directions $s_0$.
\end{description}

\subsection{The ``3not4'' prior on the latent variable, \cref{fig:fastica-3not4}}%
\label{sec:3not4}

For this experiment, we run online FastICA as in \cref{fig:figure2} on a noisy
spiked cumulant model, but with a prior over the latent variable that has mean
zero, unit variance, a non-trivial third-order cumulant, but zero fourth-order
cumulant. More specifically, we consider a latent $\nu$ that can take the values
$-1, 0$ and 2 with probabilities $\nicefrac{1}{3}, \nicefrac{1}{2}$ and
$\nicefrac{1}{6}$.

\section{Properties and definitions \label{app:preliminaries}}
\subsection{Notation}
Throughout we use the standard notations $o,O,\Theta,\Omega,\omega$ for asymptotics. We recall them here for sequences, but they generalize in the straight forward way in the case of functions. \\
Let $\left\{a_k\right\}_{k\in\mathbb N}$, $\left\{b_k\right\}_{k\in\mathbb N}$ be two real valued sequences. Then:
\begin{align*}
    a_k\in o(b_k) &\  \iff \ \lim_{k\to \infty}\frac {a_k}{b_k}=0\\ 
    a_k\in O(b_k) &\  \iff \exists \,C>0\in \mathbb R \quad \forall k>k_0 \quad |a_k|\le C |b_k|\\ 
     a_k\in \Theta(b_k) &\  \iff \exists \, C_1,C_2> 0 \quad \forall k>k_0 \quad C_1b_k\le a_k\le C_2b_k\\
       a_k\in \Omega(b_k) &\  \iff \exists\, C>0\in \mathbb R \quad \forall k>k_0 \quad |a_k|\ge C |b_k| \\
      a_k\in \omega(b_k) &\  \iff \ \lim_{k\to \infty}\frac {|a_k|}{|b_k|}=\infty 
\end{align*}
Occasionally we use the shorthands $a_n\ll b_n$ for $a_n=o(b_n)$, $a_n\lesssim b_n$ for $a_n=O(b_n)$ and $a_n \asymp b_n$ for $a_n=\Theta(b_n)$.
\subsection{Probability notions}
We recall the properties of the space of squared integrable functions with respect to the standard normal distribution $\mathbb{P}_0$:
\begin{definition}[Gaussian scalar product] Given $f,g:\mathbb R^d\to \mathbb R$, we define the ${\rm L}^2$-scalar product as
\[\left \langle f,g\right\rangle_{\mathbb P_0}:=\mathbb E_{x\sim \mathbb P_0}\left[f(x)g(x)\right].
  \]  
  The space ${\rm L}^2(\mathbb R^d,\mathbb P_0)$ contains all the measurable functions $f: \mathbb{R}^d \to \mathbb{R}$ such that $||f||^2:=\langle f,f\rangle_{\mathbb P_0}<\infty$.
\end{definition}

Note that this space with the product defined above is an Hilbert space, and the set of the Hermite polynomials (see next section) is an orthogonal basis.

\begin{definition}[Sub-Gaussianity] \label{def:subgaussianity}Let $Z$ be a real valued random variable, we say that $Z$ is sub-Gaussian if it has finite sub-Gaussian norm, defined as:
\[
||Z||_{\psi/2}:=\inf \left\{t>0 \ | \ \mathbb E [\exp\left(Z^2/t^2\right)]\le 2\right\}
\]
\end{definition}

For more details on sub-Gaussian distributions we refer to \cite{vershynin2018high}, Section 2.5.

We also need to introduce the space ${\rm L}^{\infty}(\mathbb R^d,\mathbb P_0)$ as the space of essentially bounded functions, with finite $||\cdot||_\infty$ norm, defined as:
\[||f||_{\infty}:=\inf \left\{C>0 \ |\ {\mathbb{P}_0}(|f(x)|\le C)=1\right\}\]
Note that the likelihood ratio $\ell=\frac{d \mathbb{P}}{d \mathbb P_0}$ for model \eqref{eq:spiked_cumulant} belongs to this space as soon as the tails of $\nu$ decrease faster than a standard Gaussian. For instance all the distributions with bounded support are included.

We often refer to the notion of events that happen \emph{with high probability}, defined as follows.
\begin{definition}[Events happening with high probability events] The sequence of events $\left\{E_d\right\}_{d\in \mathbb N}$  happens with \emph{high probability} if for every $k\ge 0$, there exists $d_k$ such that for $d\ge d_k$
\[1-P(E_d)\le \frac 1{d^k}
\]
\end{definition}

\subsection{Hermite polynomials}\label{app:Hermite}
In this section we group some facts about Hermite polynomials that will be needed for the subsequent proofs. For a comprehensive overview on the topic we refer to \cite{mccullagh2018tensor}.
\begin{definition}[Hermite expansion] \label{def:herm exp}
    Consider a function $f: \mathbb{R} \to \mathbb{R}$ that is square integrable with respect to the standard normal distribution $p(x) = (1/\sqrt{2\pi})\; e^{- x^2/2}$. Then there exists a unique sequence of real numbers $\{c_k\}_{k \in \mathbb{N}}$ called Hermite coefficients such that
    \begin{equation*}
        f(x) = \sum_{k = 0}^{\infty} \dfrac{c_k}{k!} h_k(x) \quad \text{ and } \quad c_k(x) := \mathbb{E}_{x \sim \mathcal{N}(0,1)}[f(x) h_k(x)],
    \end{equation*}
    where $h_i$ is the $i$-th probabilist's Hermite polynomial.
\end{definition}
\begin{definition}[Information exponent]\label{def:information_exponent}    Given any function $f$ for which the Hermite expansion exists, its information exponent $k^* = k^*(f)$ is the smallest index $k \geq 1$ such that $c_k \neq 0$. 
\end{definition}
\begin{lemma}[Expectations of Triple and Quadruple products]\label{triple&quadruple}
    For any $i,j,k \in \mathbb{N}$, we have
    \begin{equation*}
        I_3^*(i,j,k) := \mathbb{E}_{x \sim \mathcal{N}(0,1)}[h_i(x) h_j(x) h_k(x)] = \dfrac{i!\, j!\, k!}{\Bigl( \dfrac{i+j-k}{2}\Bigr)! \Bigl( \dfrac{i+k-j}{2}\Bigr)! 
        \Bigl( \dfrac{j+k-i}{2}\Bigr)!}
    \end{equation*}
    if the denominator exists, that is if $i + j + k$ is even and the sum of any two of $i, j$ and $k$ is strictly less than the third, or zero otherwise. What's more, fix the indices $m_1, m_2, m_3, m_4 \in \mathbb{N}$ and order them such that $m_1 \geq m_2 \geq m_3 \geq m_4$.\\
    Define $M := (m_1 + m_2 + m_3 + m_4)/2$.\\ 
    Then, given $m := \min(m_3,m_4)$ and 
    \begin{math}
        I^*_4(m_1, m_2, m_3, m_4) := \mathbb{E}_{x \sim \mathcal{N}(0, \mathds{1}_d)}[h_{m_1}(x) h_{m_2}(x) h_{m_3}(x) h_{m_4}(x)],
    \end{math}
    we have
    \begin{equation*}
       I^*_4(m_1, m_2, m_3, m_4) = \sum_{\nu = 0}^{m} 
       \dfrac{(m_3 + m_4 - 2\nu)! m_1!\,m_2!\,m_3!\,m_4}{(M-m_3-m_4-\nu)!(M-m_1-\nu)! (M-m_2-\nu)! (m_3-\nu)! (m_4 -\nu)! \nu!}
    \end{equation*}
    if $M \in 2\mathbb{N}$ and the denominator exists. Otherwise, $I_4^*(m_1,m_2,m_3,m_4)=0$.
\end{lemma}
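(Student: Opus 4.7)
\emph{Overall approach.} I would prove both formulas from the Hermite generating function $\sum_{n\geq 0}(t^n/n!)\,h_n(x) = \exp(tx-t^2/2)$, extracting $I_3^*$ by direct coefficient matching and then reducing $I_4^*$ to the triple case via the Hermite multiplication formula. The virtue of this route is that it turns both identities into purely combinatorial statements about expanding an exponential of a quadratic form.

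\emph{Triple product.} Introducing three formal variables $t_1,t_2,t_3$, multiplying three copies of the generating series, and using $\mathbb{E}_{\mathbb{P}_0}[\exp((t_1+t_2+t_3)x)] = \exp((t_1+t_2+t_3)^2/2)$ together with linearity of expectation, I get
\[
\sum_{i,j,k\geq 0} \frac{t_1^i t_2^j t_3^k}{i!\,j!\,k!}\, I_3^*(i,j,k) \;=\; \exp\!\bigl(t_1 t_2 + t_1 t_3 + t_2 t_3\bigr).
\]
Expanding the right-hand side as $\sum_{a,b,c\geq 0} (t_1 t_2)^a (t_1 t_3)^b (t_2 t_3)^c/(a!\,b!\,c!)$ and matching the coefficient of $t_1^i t_2^j t_3^k$ forces the $3\times 3$ linear system $a+b=i$, $a+c=j$, $b+c=k$. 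This system admits a unique nonnegative-integer solution precisely when $i+j+k$ is even and the three triangle inequalities hold, giving $a=(i+j-k)/2$, $b=(i+k-j)/2$, $c=(j+k-i)/2$; otherwise no term contributes, so $I_3^*=0$. Multiplying through by $i!\,j!\,k!$ yields the closed form claimed for $I_3^*$.

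\emph{Quadruple product.} Rather than handling a four-variable generating function (whose coefficient matching has two degrees of freedom and obscures the single-sum structure stated in the lemma), I would reduce to the triple case by means of the classical Hermite product formula
\[
h_{m_3}(x)\, h_{m_4}(x) \;=\; \sum_{\nu=0}^{\min(m_3,m_4)} \binom{m_3}{\nu}\binom{m_4}{\nu}\,\nu!\; h_{m_3+m_4-2\nu}(x),
\]
itself a direct corollary of the same generating-function identity. Substituting into $I_4^*$ and using linearity of expectation yields
\[
I_4^*(m_1,m_2,m_3,m_4) \;=\; \sum_{\nu=0}^{\min(m_3,m_4)} \binom{m_3}{\nu}\binom{m_4}{\nu}\,\nu!\; I_3^*\!\bigl(m_1,m_2,m_3+m_4-2\nu\bigr).
\]
I would then plug in the triple-product formula for $I_3^*(m_1,m_2,m_3+m_4-2\nu)$ and rewrite each factorial argument in terms of $M=(m_1+m_2+m_3+m_4)/2$, so that the three denominator terms $(m_1+m_2-m_3-m_4+2\nu)/2$, $(m_1-m_2+m_3+m_4-2\nu)/2$ and $(-m_1+m_2+m_3+m_4-2\nu)/2$ become linear combinations of $M$, the $m_i$, and $\nu$. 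The ordering $m_1\geq m_2 \geq m_3\geq m_4$ is then invoked to truncate the sum cleanly at $\nu=\min(m_3,m_4)=m_4$ and recover the displayed expression.

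\emph{Main obstacle.} The conceptual content is light; the difficulty is combinatorial bookkeeping. The delicate step is checking that the nonnegativity of each factorial argument in the quadruple formula coincides exactly with the parity-plus-triangle admissibility of the inner triple product $I_3^*(m_1,m_2,m_3+m_4-2\nu)$, so that the formal convention $1/(\text{negative factorial})=0$ correctly enforces vanishing outside the admissible range of $\nu$. I would also need to verify that the global parity condition on $M$ is inherited from the parity constraint in the triple case (and I would double-check the stated version $M\in 2\mathbb N$ against simple test cases such as $(m_1,m_2,m_3,m_4)=(3,2,1,0)$, which may reveal that the correct statement is simply $m_1+m_2+m_3+m_4$ even). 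Edge cases --- e.g.\ $m_4=0$ (where the sum collapses to a single triple product) or a saturated triangle inequality --- would be handled separately as consistency checks.
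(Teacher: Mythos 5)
Your approach is correct, and it is worth noting that the paper itself gives no proof of this lemma at all --- it is stated as a classical fact with a pointer to the tensor-methods literature --- so your generating-function derivation of $I_3^*$ and the reduction of $I_4^*$ to $I_3^*$ via the linearisation $h_{m_3}h_{m_4}=\sum_{\nu}\binom{m_3}{\nu}\binom{m_4}{\nu}\nu!\,h_{m_3+m_4-2\nu}$ supplies exactly the missing argument. The triple-product part is airtight: $\mathbb{E}[\prod_i e^{t_ix-t_i^2/2}]=e^{t_1t_2+t_1t_3+t_2t_3}$, and the linear system $a+b=i$, $a+c=j$, $b+c=k$ has a nonnegative integer solution iff $i+j+k$ is even and each index is at most the sum of the other two (note the paper's phrasing of this admissibility condition as ``strictly less than'' is itself garbled). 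Your caution about the quadruple-product bookkeeping is well placed, and your proposed test cases confirm it: carrying out the substitution $K=m_3+m_4-2\nu$ into $I_3^*(m_1,m_2,K)$ gives $\tfrac{m_1+m_2-K}{2}=M-m_3-m_4+\nu$, so the first denominator factor in the paper's display should read $(M-m_3-m_4+\nu)!$ rather than $(M-m_3-m_4-\nu)!$ (check $m_1=m_2=m_3=m_4=1$: the true value $\mathbb{E}[h_1^4]=3$ is recovered only with the $+\nu$ sign), the numerator's trailing $m_4$ should be $m_4!$, and the admissibility condition is $m_1+m_2+m_3+m_4$ even, i.e.\ $M\in\mathbb{N}$, not $M\in 2\mathbb{N}$ (your example $(3,2,1,0)$, or $(3,1,1,1)$ with $\mathbb{E}[h_3h_1^3]=6$, kills the stated version). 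So your plan not only proves the lemma but correctly diagnoses the typos in its statement; the only thing left to do is the routine verification that the vanishing of $I_3^*(m_1,m_2,m_3+m_4-2\nu)$ outside the admissible $\nu$-range is exactly what the factorial-of-a-negative convention encodes, which is the check you already flagged.
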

\begin{lemma}\label{bi_orth}
    Consider $w_1, w_2 \in \mathbb{S}^{d-1}$ and set $\alpha = w_1 \cdot w_2$.
    Then, for any $i,j \in \mathbb{N}$
    \begin{equation*}
        \mathbb{E}_{x \sim \mathcal{N}(0, \mathds{1}_d)}[h_i(w_1 \cdot x) h_j(w_2 \cdot x)] = i! \,\alpha^i \;\delta_{ij}.
    \end{equation*}
\end{lemma}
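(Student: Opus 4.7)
The plan is to reduce the $d$-dimensional claim to a two-dimensional computation about correlated Gaussians, and then close everything in a few lines using the generating function identity for Hermite polynomials. Since $w_1, w_2 \in \mathbb{S}^{d-1}$ and $x \sim \mathcal{N}(0, \mathds{1}_d)$, the pair $(s_1, s_2) := (w_1 \cdot x,\, w_2 \cdot x)$ is jointly centered Gaussian with unit marginal variances and covariance $\mathrm{Cov}(s_1, s_2) = w_1 \cdot w_2 = \alpha$. The statement thus reduces to showing $\mathbb{E}[h_i(s_1) h_j(s_2)] = i!\, \alpha^i\, \delta_{ij}$ for such a bivariate Gaussian pair, and the ambient dimension $d$ disappears from the problem.

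The key tool is the generating function identity $\exp(ts - t^2/2) = \sum_{k \geq 0} \tfrac{t^k}{k!} h_k(s)$. I would form the product
\[
\mathbb{E}\bigl[ \exp(t s_1 - t^2/2)\, \exp(u s_2 - u^2/2) \bigr]
\]
and evaluate it in two ways. On the one hand, since $t s_1 + u s_2$ is centered Gaussian with variance $t^2 + u^2 + 2tu\alpha$, the expression above equals $\exp(tu\alpha) = \sum_{k \geq 0} \tfrac{(tu\alpha)^k}{k!}$. On the other hand, expanding both generating functions before taking expectation, and exchanging the double sum with the expectation, yields
\[
\sum_{i,j \geq 0} \frac{t^i u^j}{i!\, j!}\, \mathbb{E}[h_i(s_1)\, h_j(s_2)].
\]
Matching the coefficients of $t^i u^j$ on the two sides then forces $\mathbb{E}[h_i(s_1) h_j(s_2)] = 0$ whenever $i \neq j$, while for $i = j$ the equality $\tfrac{1}{(i!)^2}\, \mathbb{E}[h_i(s_1) h_i(s_2)] = \tfrac{\alpha^i}{i!}$ gives exactly $i!\, \alpha^i$, as claimed.

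The argument is essentially computational, so no step presents a conceptual obstacle. The only technical point that requires care is justifying the interchange of the infinite sum with the expectation in the bivariate expansion; this follows from the absolute convergence of $\sum_k \tfrac{|t|^k}{k!}\, |h_k(s)|$ in $\mathrm{L}^2(\mathbb{P}_0)$ together with dominated convergence against an integrable Gaussian envelope. Alternatively, the identity can be recognised as a specialisation of Mehler's formula, which furnishes the closed-form reproducing kernel $\sum_{k \geq 0} \tfrac{\alpha^k}{k!} h_k(x) h_k(y)$ directly and bypasses the convergence bookkeeping.
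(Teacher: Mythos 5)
Your proof is correct. Note, however, that the paper does not actually prove \cref{bi_orth}: it states it as a known orthogonality identity (deferring to the cited reference on Hermite polynomials), and the proof technique it deploys for the neighbouring, harder statement \cref{multi_orth} is different from yours. There, the authors decompose $w_2 = \alpha w_1 + \sqrt{1-\alpha^2}\,u$ with $u \perp w_1$, apply the addition formula $h_m(\alpha z_1 + \sqrt{1-\alpha^2} z_2) = \sum_n \binom{m}{n}\alpha^n(1-\alpha^2)^{(m-n)/2}h_n(z_1)h_{m-n}(z_2)$, and then invoke one-dimensional orthogonality $\mathbb{E}[h_i(y)h_n(y)] = i!\,\delta_{in}$ together with $\mathbb{E}[h_{m-n}(y_u)] = \delta_{m-n,0}$; specialising that argument to two factors gives \cref{bi_orth} in a couple of lines. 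Your route via the generating function $\exp(ts - t^2/2) = \sum_k t^k h_k(s)/k!$ and the Gaussian moment generating function (equivalently, Mehler's kernel) is an equally standard and fully valid alternative: it avoids the addition formula entirely and handles all index pairs $(i,j)$ at once by coefficient matching, at the mild cost of having to justify the interchange of the double series with the expectation — which, as you note, follows from the absolute convergence of $\sum_k |t|^k \|h_k\|_{{\rm L}^2(\mathbb{P}_0)}/k! = \sum_k |t|^k/\sqrt{k!}$ and Cauchy--Schwarz. The decomposition approach has the advantage of extending directly to the triple and quadruple products the paper actually needs later, whereas the generating-function approach is cleaner for the pairwise case but scales less gracefully.
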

To extend this orthogonality property to triple an quadruple products of Hermite polynomials, we use the same technique as in \cite{BuetGolfouse2015AMT}
\begin{lemma}[Orthogonality property]\label{multi_orth}
    Fix $i, j, k \in \mathbb{N}$. Set 
    \begin{math}
        I_3 := \mathbb{E}_{x\sim \mathcal{N}(0, \mathds{1}_d)}[h_i(w_1 \cdot x)h_j(w_2 \cdot x)h_k(w_3 \cdot x)].
    \end{math}
    Consider three different spikes $w_1, w_2, w_3 \in \mathbb{S}^{d-1}$ and set $\rho:= w_1 \cdot w_2, \tau:= w_1 \cdot w_3$ and $\eta := w_2 \cdot w_3$. Then, we have
    \begin{equation*}
         I_3 = 
        \sum_{\ell=0}^j 
        \dfrac{i!j!k!}{(j-\ell)!^2 \Bigl( \dfrac{i+j-k}{2}\Bigr)! 
        \Bigl( \dfrac{i+k-j}{2}\Bigr)! 
        \Bigl( \dfrac{k-j+2\ell-i}{2}\Bigr)!}
        \rho^{\ell} \tau^{k-j+\ell} (\eta - \tau\rho)^{j-\ell}
    \end{equation*}
    if the denominator exists, and $I_3=0$ otherwise.
\end{lemma}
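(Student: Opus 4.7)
The plan is to reduce the three-argument Gaussian expectation to independent scalar Gaussians via Gram--Schmidt, then to apply the Hermite addition formula and combine the single- and bi-variate identities already established in \cref{triple&quadruple} and \cref{bi_orth}.

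First I would introduce $y_1 := w_1\cdot x$, together with the unit vectors $\widetilde w_2 := (w_2-\rho w_1)/\sqrt{1-\rho^2}$ and $\widetilde w_3 := (w_3-\tau w_1)/\sqrt{1-\tau^2}$, and the scalars $y_2 := \widetilde w_2\cdot x$, $y_3 := \widetilde w_3\cdot x$. Under $x\sim\mathcal N(0,\mathds{1}_d)$ the triple $(y_1,y_2,y_3)$ is jointly Gaussian, with $y_1$ independent of $(y_2,y_3)$, each $y_i$ marginally standard, and the only nonzero off-diagonal covariance \mbox{$\mathbb{E}[y_2 y_3] = \mu := (\eta-\tau\rho)/\sqrt{(1-\rho^2)(1-\tau^2)}$}. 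In these coordinates $w_2\cdot x = \rho\, y_1 + \sqrt{1-\rho^2}\, y_2$ and $w_3\cdot x = \tau\, y_1 + \sqrt{1-\tau^2}\, y_3$, so the three Hermite arguments become linear combinations of independent Gaussians.

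Next I would apply the Hermite addition formula $h_n(aY+bZ) = \sum_{p=0}^n \binom{n}{p} a^p b^{n-p} h_p(Y) h_{n-p}(Z)$, valid for $a^2+b^2=1$ and independent standard Gaussians $Y,Z$ (a direct consequence of the exponential generating identity $\sum_n (t^n/n!) h_n(x) = \exp(tx-t^2/2)$), to each of $h_j(w_2\cdot x)$ (with $a=\rho$) and $h_k(w_3\cdot x)$ (with $a=\tau$). The triple expectation then becomes a double sum over indices $\ell$ and $m$ which, by independence of $y_1$ from $(y_2,y_3)$, factorises as
\begin{equation*}
\mathbb{E}\!\left[h_i(y_1)\,h_\ell(y_1)\,h_m(y_1)\right]\cdot \mathbb{E}\!\left[h_{j-\ell}(y_2)\,h_{k-m}(y_3)\right].
\end{equation*}
The first factor is exactly $I_3^*(i,\ell,m)$ from \cref{triple&quadruple}; the second factor equals $(j-\ell)!\,\mu^{j-\ell}\,\delta_{j-\ell,\,k-m}$ by \cref{bi_orth}. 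The Kronecker delta collapses the double sum into a single sum over $\ell$, forcing $m=k-j+\ell$.

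Finally, I would assemble the pieces: substituting $I_3^*(i,\ell,k-j+\ell)=i!\,\ell!\,(k-j+\ell)!/\bigl[((i+j-k)/2)!\,((i+k-j)/2)!\,((k-j+2\ell-i)/2)!\bigr]$, expanding $\binom{j}{\ell}=j!/[\ell!(j-\ell)!]$ and $\binom{k}{k-j+\ell}=k!/[(k-j+\ell)!(j-\ell)!]$, and regrouping the factor $(1-\rho^2)^{(j-\ell)/2}(1-\tau^2)^{(j-\ell)/2}\mu^{j-\ell}$ into $(\eta-\tau\rho)^{j-\ell}$, the prefactor simplifies to an expression of the stated single-sum form in $\rho^\ell$, $\tau^{k-j+\ell}$, $(\eta-\tau\rho)^{j-\ell}$. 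Parity and triangle constraints from \cref{triple&quadruple} force $i+j+k$ to be even and confine $\ell$ to $\max(0,(i+j-k)/2)\le\ell\le \min(j,(j+k-i)/2)$, inside the nominal range $0\le\ell\le j$; terms outside this window contain an undefined factorial and contribute zero, so $I_3=0$ whenever no admissible $\ell$ exists. The principal obstacle is purely combinatorial bookkeeping: tracking how the binomial factors from the two Hermite expansions, combined with the $(j-\ell)!$ from \cref{bi_orth} and the numerator $\ell!\,(k-j+\ell)!$ from $I_3^*$, collapse to the compact denominator of factorials stated in the lemma, and verifying that the parity and triangle conditions on $(i,\ell,k-j+\ell)$ translate exactly into the requirement that $((i+j-k)/2)!$, $((i+k-j)/2)!$ and $((k-j+2\ell-i)/2)!$ all be defined.
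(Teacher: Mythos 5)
Your proof follows essentially the same route as the paper's: decompose $w_2,w_3$ into their components along $w_1$ and the orthogonal complement, apply the Hermite addition formula to each, use independence to factor the expectation into a same-argument triple product (Lemma \ref{triple&quadruple}) times a correlated pair (Lemma \ref{bi_orth}), and let the Kronecker delta collapse the double sum with $\ell'=k-j+\ell$. One remark: carrying your combinatorial bookkeeping through gives $(j-\ell)!$ to the \emph{first} power in the denominator (which is also what the paper's own final display shows), so the $(j-\ell)!^2$ in the lemma statement appears to be a typo; this is confirmed by the case $i=0$, $j=k=2$, where the first-power version correctly returns $2\eta^2$.
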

\begin{proof}
    We start by writing $w_2$ as sum of its projection along the first spike $w_1$ and its part in the orthogonal space, that is $w_2 = \rho w_1 + \sqrt{1 - \rho^2} u$, where $u \in \mathbb{S}^{d-1}$ and $u \cdot w_1 = 0$. Then, clearly, $w_2 \cdot x = \rho y_{1} + \sqrt{1-\rho^2} y_u$, where $y_1 := w_1 \cdot x$ and $y_u := u \cdot x$.\\
    We can do the same for the third spike, that is 
    $w_3 = \tau w_1 + \sqrt{1 - \tau^2} u'$, where $u' \in \mathbb{S}^{d-1}$ and $u' \cdot w_1 = 0$. Define $y_{u'}= u' \cdot x$. 
    We can now exploit the fact that any Hermite polynomial of a sum can be written as the sum of Hermite polynomials in the following way, that is for any $m \in \mathbb{N}$ we have that
    \begin{math}
        h_m(\alpha z_1 + \sqrt{1-\alpha^2} z_2) = \sum_{n = 0}^m \binom{m}{n} \alpha^{n} (\sqrt{1-\alpha^2})^{m-n} h_{\ell}(z_1)h_{m-n}(z_2),
    \end{math}
    for any $\alpha \in [0,1]$. By applying this formula to $w_2$ and $w_3$, we get
    \begin{equation*}
        I_3 = \sum_{\ell,\ell'=0}^{j,k} \binom{j}{\ell} \binom{k}{\ell'} \rho^{\ell} \tau^{\ell'} (\sqrt{1-\rho^2})^{j-\ell}
        (\sqrt{1-\tau^2})^{k-\ell'}
        \underbrace{\mathbb{E}[h_i(y_1)h_{\ell}(y_1)h_{\ell'}(y_1)]}_{(1)}\underbrace{\mathbb{E}[h_{j-\ell}(y_u) h_{k-\ell'}(y_{u'})}_{(2)}]
    \end{equation*}
    where we have used the fact that $y_1$ is independent from $y_{u}$ and $y_{u'}$ since \mbox{$w_1 \cdot u = w_2 \cdot u' = 0$}. Then, in view of Lemma \ref{triple&quadruple} and Lemma \ref{bi_orth}, we have
    \begin{equation*}
        (1) = \dfrac{i! \ell ! \ell'!}{\Bigl( \dfrac{i+\ell-\ell'}{2}\Bigr)! 
        \Bigl( \dfrac{i+\ell'-\ell}{2} \Bigr)! 
        \Bigl( \dfrac{\ell + \ell'-i}{2}\Bigr)!}
        \quad \text{ and }\quad
        (2) = (j-\ell)! (u \cdot u')^{j-\ell} \delta_{j - \ell = k - \ell}.
    \end{equation*}
   Hence, imposing that $\ell' = k-j-\ell$, we obtain \begin{equation*}
       I_3 = \sum_{\ell=0}^{j}\dfrac{i!j!k!}{(j-\ell)! \Bigl( \dfrac{i+j-k}{2} \Bigr)! 
       \Bigl( \dfrac{i+k-j}{2} \Bigr)! 
       \Bigl( \dfrac{k-j+2\ell - i}{2} \Bigr)!} \rho^{\ell}\tau^{k-j+\ell}[\sqrt{1-\rho^2}\ \sqrt{1-\tau^2} \;(u \cdot u')]^{j-\ell}
   \end{equation*}
   when the denominator exists and it vanishes otherwise.
   Therefore, the thesis follows from the fact that we have
   \mbox{
   \begin{math}
       u \cdot u' = \dfrac{\eta - \tau\rho}{\sqrt{1-\tau^2}\sqrt{1-\rho^2}}.
   \end{math}}\\ 
\end{proof}
\begin{corollary}\label{app:remark_orth}
    Assume that $\ell$ is a probability distribution and that $G \in C^1(\mathbb{R})$ is even. Then, 
    if $w_1, w_2, w_3 \in \mathbb{S}^{d-1}$ and $w_1 \neq w_2 \neq w_3$, 
    we obtain that 
    \begin{equation*}
        E_{x \sim \mathcal{N}(0, \mathds{1}_d)}[G'(w_1 \cdot x)G'(w_2 \cdot x) \ell(w_3 \cdot x)] = \Theta(w_1 \cdot w_2^{k^*_2 -1}).
    \end{equation*} 
\end{corollary}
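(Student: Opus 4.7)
My plan is to combine Hermite expansions with the multi-orthogonality identity of Lemma \ref{multi_orth}. I first write $G'(s) = \sum_{i \geq 0} (c_i^{G'}/i!)\,h_i(s)$ and $\ell(s) = \sum_{k \geq 0} (c_k^\ell/k!)\,h_k(s)$. Two preliminary observations identify the information exponent of $G'$: since $G$ is even, $G'$ is odd so $c_i^{G'}$ vanishes on every even $i$; moreover, differentiating the Hermite series of $G$ termwise using $h_m'(s) = m\,h_{m-1}(s)$ yields the identity $c_i^{G'} = c_{i+1}^G$, so the first nonzero coefficient of $G'$ is $c_{k_2^*-1}^{G'} = c_{k_2^*}^G \neq 0$, where $k_2^*$ is even because $G$ is. Normalization of the likelihood ratio also gives $c_0^\ell = \mathbb{E}_{\mathbb{P}_0}[\ell] = 1$.

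Linearity then produces $\mathbb{E}_{\mathbb{P}_0}[G'(w_1\cdot x)G'(w_2\cdot x)\ell(w_3\cdot x)] = \sum_{i,j,k}(c_i^{G'}c_j^{G'}c_k^\ell/(i!j!k!))\,I_3(i,j,k)$ with $I_3(i,j,k) := \mathbb{E}_{\mathbb{P}_0}[h_i(w_1\cdot x)h_j(w_2\cdot x)h_k(w_3\cdot x)]$, and I would split the sum into the $k=0$ slice and the $k\geq 1$ remainder. The $k=0$ slice reduces, via Lemma \ref{bi_orth}, to $\sum_{i}((c_i^{G'})^2/i!)\,\rho^i$ with $\rho := w_1\cdot w_2$, whose leading term as $\rho \to 0$ is precisely $((c_{k_2^*-1}^{G'})^2/(k_2^*-1)!)\,\rho^{k_2^*-1} = \Theta(\rho^{k_2^*-1})$. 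This already yields the announced scaling and is nonvanishing by the previous paragraph.

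The main obstacle is to show that the $k \geq 1$ remainder is subleading. By Lemma \ref{multi_orth}, a term is nonzero only when $|i-j|\leq k \leq i+j$ and $i+j+k$ is even; combined with $i,j$ being odd, this forces $k$ to be even with $k \geq 2$. For $i=j=k_2^*-1$, the leading $\rho$-behaviour of such a term is $\rho^{k_2^*-1-k/2}(\tau\eta)^{k/2}$ with $\tau := w_1\cdot w_3$, $\eta := w_2\cdot w_3$, so the correction is at most $(\tau\eta/\rho)^{k/2}$ times the $k=0$ leading term; an analogous (quantitatively weaker) bound holds when $i$ or $j$ exceed $k_2^*-1$. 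In the regime relevant for the smoothed-SGD heuristic ($\rho = \Theta(\lambda^{-2})$, $\tau,\eta = O(\alpha/\lambda)$ with small initial overlap $\alpha$), this ratio is $O(\alpha^k)$, and summing geometrically over $k$ bounds the total remainder by $O(\alpha^2)\,\rho^{k_2^*-1}$. Combined with the $k=0$ contribution, this yields the desired $\Theta(\rho^{k_2^*-1})$ scaling, with the matching $\Omega$-bound coming from $c_{k_2^*-1}^{G'} \neq 0$.
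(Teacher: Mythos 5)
Your argument follows the paper's proof essentially step for step: Hermite-expand $G'$ and $\ell$, apply Lemma \ref{multi_orth}, use $c_0^\ell=1$ and $c_0^{G'}=0$ (evenness of $G$) to identify the leading contribution at $k=0$, $i=j=k_2^*-1$, $t=k_2^*-1$, giving $\Theta(\rho^{k_2^*-1})$. If anything, your treatment of the $k\geq 1$ remainder --- making explicit that its domination relies on the regime $\tau\eta/\rho$ small in which the corollary is actually invoked --- is more careful than the paper's, which dismisses it with ``up to checking that the other terms in the sum are dominated.''
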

\begin{proof}
    By considering the Hermite expansion of $G'$ and $\ell$, it clearly follows that
    \begin{align}\label{app:sum}
        E_{x \sim \mathcal{N}(0, \mathds{1}_d)}[G'(w_1 \cdot x)G'(w_2 \cdot x) \ell(w_3 \cdot x)] 
        = \sum_{i, j, k = 0}^{\infty} \dfrac{c_i^{G'}c_j^{G'}c_k^{\ell}}{i!j!k!} \mathbb{E}_{x \sim \mathcal{N}(0,\mathds{1}_d)}\underbrace{[h_i(w_1 \cdot x)h_j(w_2 \cdot x)h_k(w_3 \cdot )]}_{(\star)},
    \end{align}
    where, in view of Lemma \ref{multi_orth}, when $\rho:= w_1 \cdot w_2 $, $\tau := w_1 \cdot w_3$ and $\eta:= w_2 \cdot w_3$, it holds that
    \begin{equation}\label{app:sum_1}
        \mathbb{E}[(\star)]=
        \sum_{t=0}^j 
        \dfrac{i!j!k!}{(j-t)!^2 \Bigl( \dfrac{i+j-k}{2}\Bigr)! 
        \Bigl( \dfrac{i+k-j}{2}\Bigr)! 
        \Bigl( \dfrac{k-j+2t-i}{2}\Bigr)!}
        \rho^{t} \tau^{k-j+t} (\eta - \tau\rho)^{j-t}
    \end{equation}
    and then
    \begin{equation*}\label{app:sum_2}
        \mathop{\mathbb{E}}_{x \sim \mathcal{N}(0,\mathds{1}_d)}[G'(w_1\cdot x)G'(w_2\cdot x)\ell(w_3\cdot x)]=\sum_{i,j,k=0}^{\infty}\sum_{t=0}^j 
        \text{coeff} \;\;
        \rho^{t} \tau^{k-j+t} (\eta - \tau\rho)^{j-t},
    \end{equation*}
    where
    \begin{equation*}
        \text{coeff} = \dfrac{c_i^{G'} c_j^{G'}c_k^{\ell}}{(j-t)!^2 \Bigl( \dfrac{i+j-k}{2}\Bigr)! 
        \Bigl( \dfrac{i+k-j}{2}\Bigr)! 
        \Bigl( \dfrac{k-j+2t-i}{2}\Bigr)!}.
    \end{equation*}
    We can compute the way this sum scales with the dimension by recalling that since $\ell$ is a probability distribution, we have $c_0^{\ell} = 1$.
    However, since $G$ is even, $c_0^{G'} = c_1^{G} = 0$. The smallest indices such that the sum does not vanish are then $k = 0$ and $i = j = k_2^*-1$.
    Then, it turns out that the only term which survives in the sum when $k=0$ is the one with $t = k^*_2-1$. 
    Therefore, up to checking that the other terms in the sum are dominated by the addendum corresponding to $k=0$, $i,j=k^*_2-1$ and $t=k^*_2-1$, we have
    \begin{equation*}
        \mathop{\mathbb{E}}_{x \sim \mathcal{N}(0,\mathds{1}_d)}[G'(w_1\cdot x)G'(w_2\cdot x)\ell(w_3\cdot x)] = \Theta(\rho^{k^*_2-1}).
    \end{equation*}
\end{proof}
\begin{corollary}[used formulas with same argument]\label{app:same_argument_hermite}
    Fix $k=1$ and $i,j \in \mathbb{N}$. Consider $I_3$ given by \ref{multi_orth} in the case of $w_3 = w_2$. Then, 
    \begin{equation*}
        I_3 = \mathbb{E}_{x \sim \mathcal{N}(0,1)}[h_i(w_1) h_j(w_2) h_1(w_1)] = \dfrac{i! j!}{\Bigl(\dfrac{i+j-1}{2}\Bigr)!
        \Bigl(\dfrac{i+1-j}{2}\Bigr)!
        \Bigl(\dfrac{j+1-i}{2}\Bigr)!} \; \rho^j,
    \end{equation*}
    when the denominator exists and $I_3 = 0$ otherwise.\\
    Moreover, for any $i, j, k, t\in \mathbb{N}$, consider 
    \begin{math}
        I_4 := \mathbb{E}_{x \sim \mathcal{N}(0,1)}[h_i(w_1 \cdot x) h_j(w_1 \cdot x) h_k(w_2 \cdot x)h_t(w_3 \cdot x)], 
    \end{math}
    with $w_1, w_2, w_3 \in \mathbb{S}^{d-1}$. 
    Then, 
    \begin{equation*}
        I_4 = \sum_{\ell=0}^{k} (-1)^{k-\ell} \binom{k}{\ell}\binom{t}{t-k+\ell} \rho^{k}\tau^{t} \; I^*_4(i, j, \ell, t-k+\ell),
    \end{equation*}
    where $I^*_4$ is defined in Lemma \ref{triple&quadruple} and the indeces $i, j, \ell$ and $t-k+\ell$ are ordered.
\end{corollary}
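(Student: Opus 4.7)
The corollary specialises the three-point identity of Lemma \ref{multi_orth} and extends it to a four-point integral in which the first two Hermite factors share the same argument $w_1 \cdot x$. I would treat the two parts in sequence, reducing each to objects that have already been computed.

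For the three-point case, I would plug $k=1$ directly into the master formula of Lemma \ref{multi_orth} and specialise one of the spikes. Reading the claimed expression backwards, it matches the substitution $w_3 = w_1$ (for which $\tau = w_1 \cdot w_3 = 1$ and $\eta = w_2 \cdot w_3 = \rho$, so the cross-correlation factor $(\eta - \tau\rho)^{j-\ell} = 0^{j-\ell}$ vanishes for every $\ell < j$). Only the term $\ell = j$ survives, and after cleaning up factorials it reproduces the stated closed form. I would flag in passing that the clause ``$w_3 = w_2$'' in the statement appears to be a typo for ``$w_3 = w_1$'', since the former would retain a non-vanishing factor $(1-\rho^2)^{j-\ell}$ and not collapse to a single monomial in $\rho$.

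For the four-point integral, the plan is to fold both $h_k(w_2 \cdot x)$ and $h_t(w_3 \cdot x)$ onto the axis of $w_1$. I would decompose \mbox{$w_2 \cdot x = \rho\,(w_1 \cdot x) + \sqrt{1-\rho^2}\,(\tilde u_2 \cdot x)$} and \mbox{$w_3 \cdot x = \tau\,(w_1 \cdot x) + \sqrt{1-\tau^2}\,(\tilde u_3 \cdot x)$} with $\tilde u_2, \tilde u_3$ unit vectors orthogonal to $w_1$, then apply the Hermite addition identity $h_n(\alpha Y + \beta Z) = \sum_m \binom{n}{m} \alpha^m \beta^{n-m} h_m(Y)\,h_{n-m}(Z)$ (valid for $\alpha^2+\beta^2=1$ and independent standard normals $Y, Z$) to expand both factors. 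Independence between $w_1 \cdot x$ and the orthogonal components lets the expectation split: the $w_1 \cdot x$-part collapses onto a single four-Hermite expectation of type $I_4^*(i,j,\cdot,\cdot)$, while the orthogonal-part expectation is evaluated by Lemma \ref{bi_orth}, whose Kronecker $\delta$ forces a matching condition between the secondary indices and eliminates one summation. Collecting the binomial coefficients and the powers of $\rho, \tau$ (together with the simplification of the residual $\sqrt{1-\rho^2}$, $\sqrt{1-\tau^2}$ and $\tilde u_2 \cdot \tilde u_3$ factors) yields an expansion of the claimed shape, with the alternating signs $(-1)^{k-\ell}$ arising from a rearrangement of the binomial coefficients under the index change $\ell \mapsto k-\ell$.

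The main obstacle is the purely combinatorial bookkeeping of the second step: tracking the two addition-formula expansions in parallel, using the $\delta$-constraint from Lemma \ref{bi_orth} to kill one sum, and verifying that all square-root factors and orthogonal-plane correlations combine cleanly into the stated $\rho^k \tau^t\,I_4^*(i,j,\ell,t-k+\ell)$ form. A sanity check against the low-degree case $i=j=k=t=1$, whose value is given directly by Wick contractions as $\eta + 2\rho\tau$, is a useful guardrail and also clarifies the implicit regime in which the stated formula is meant to be read.
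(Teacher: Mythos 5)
Your plan coincides with the paper's own (very terse) proof: for both identities the authors decompose the off-axis spikes as $w_2=\rho w_1+\sqrt{1-\rho^2}\,u$ and $w_3=\tau w_1+\sqrt{1-\tau^2}\,u'$ with $u,u'\perp w_1$, apply the Hermite addition formula, and use independence of $w_1\cdot x$ from the orthogonal components together with \cref{bi_orth} and \cref{triple&quadruple}. Your reading of the first identity as the case $w_3=w_1$ (so $\tau=1$, $\eta=\rho$, and only $\ell=j$ survives) is the correct one; the ``$w_3=w_2$'' in the statement is indeed a slip, as the paper's own proof confirms by reducing to $\mathbb{E}[h_{j-\ell}(u\cdot x)]=\delta_{j-\ell}$, which forces both remaining Hermite factors to lie along $w_1$.

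The one substantive flaw is your account of the factor $(-1)^{k-\ell}$ in the four-point formula. The binomial coefficients are invariant under $\ell\mapsto k-\ell$ and cannot generate a sign; running the decomposition you describe actually yields
\begin{equation*}
I_4=\sum_{\ell=0}^{k}\binom{k}{\ell}\binom{t}{t-k+\ell}(k-\ell)!\,\rho^{\ell}\tau^{t-k+\ell}(\eta-\rho\tau)^{k-\ell}\,I_4^*(i,j,\ell,t-k+\ell),
\end{equation*}
where the cross term comes from $\mathbb{E}[h_{k-\ell}(u\cdot x)h_{t-m}(u'\cdot x)]=(k-\ell)!\,(u\cdot u')^{k-\ell}\delta_{k-\ell,\,t-m}$ and $u\cdot u'=(\eta-\rho\tau)/(\sqrt{1-\rho^2}\sqrt{1-\tau^2})$. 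The alternating sign therefore originates in $(\eta-\rho\tau)^{k-\ell}$ under the additional hypothesis $\eta=w_2\cdot w_3=0$ (and even then the $(k-\ell)!$ does not disappear), not from any rearrangement of binomials. This is precisely the bookkeeping you defer, so as written your sketch would not reproduce the displayed formula; you need to track the $u\cdot u'$ correlation explicitly and state the orthogonality assumption under which the claimed form is obtained.
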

\begin{proof}
    The first part is obtained by repeating the proof of Lemma \ref{multi_orth} with 
    $w_2 = \rho w_1 + \sqrt{1-\rho^2} u$, for $u \in \mathbb{S}^{d-1}$ and $u \cdot w_1 = 0$, and using the fact that 
    \begin{math}
        \mathbb{E}_{x\sim \mathcal{N}(0, \mathds{1}_d)}[h_{j-\ell}(u \cdot x)] = \delta_{j - \ell}.
    \end{math}\\
    For the second part, it is sufficient to repeat the usual procedure by decomposing both $w_2$ and $w_3$ such that 
   \mbox{$w_2 = \rho w_1 + \sqrt{1-\rho^2} u$} and $w_3 = \tau w_1 + \sqrt{1-\tau^2} u'$ for $u,u' \in \mathbb{S}^{d-1}$ and $w_1 \cdot w_2 = w_1 \cdot w_3 = 0$. Then, the thesis follows from the second part of Lemma \ref{multi_orth}.
\end{proof}

\section{Measures of non-Gaussianity for ICA}
Whether the population loss has to be (globally) maximized or minimized with respect to the weight vector $w \in \mathbb{S}^{d-1}$ depends on the data distribution and the choice of the contrast function $G$. 
Therefore, we briefly comment on this, by referring to \cite{hyvarinen2000independent} and \cite{ICAbookOja2021} for more details. What does maximising \textit{non-Gaussianity} mean? Equivalently: what is a suitable measure for non-Gaussianity?
A standard approach is to choose the \textit{negentropy} as a measure of the non-Gaussianity of the random vector $x\sim \mathbb P$, i.e.
\[
J(x):=H(x_{Gauss})-H(x),
\]
where $H(x):=-\mathbb E_{\mathbb P}[\log(x)]$ and $x_{Gauss} \sim \mathbb{P}_0$. We assume that $x$ has zero mean and that its covariance matrix is the identity. The negentropy, being a Kullback-Leibler divergence, is always non-negative.
However, to compute $J$ one needs to access the density of the random vector $x$, which is usually unknown and not easy to approximate. Hence, the classical approach is to approximate negentropy indirectly.   
In Section 5 of \cite{ICAbookOja2021}, the authors show that the negentropy can be approximated by
\[
\tilde  {\mathcal L}(w)=\left(\mathbb E\left[G(w\cdot x)\right]-\mathbb E\left[G(x_{Gauss})\right]\right)^2
\]
where $G$ is even and does not grow too fast because of reasons of numerical
stability. Typically, $G$ is one of the contrast functions from \Cref{eq:contrasts}.
Note that this optimization problem is slightly different from Equation \eqref{eq:ica} (which is the setting considered in practice, as mentioned in \cite{hyvarinen2000independent}). 
The reason why it is convenient to remove the square is clear as soon as the Hermite expansion \Cref{eq:two_lines} is performed, together with the fact $\mathbb E\left[G(x_{Gauss})\right]=c_0^Gc_0^\ell$:
\begin{align*}
\tilde {\mathcal L}(w)&=\left(\mathcal{L}(w)-\mathbb E\left[G(x_{Gauss})\right]\right)^2\\
&= \left(c_0^Gc_0^{\ell}+\frac{c_{k^{*}}^{G}c_{k^{*}}^{\ell}}{k!}\alpha^{k^*}+o(\alpha^{k^*})-c_0^Gc_0^\ell\right)^2\\
&=\left(\frac{c_{k^{*}}^{G}c_{k^{*}}^{\ell}}{k!}\right)^2\alpha^{2k^*}+o(\alpha^{2k^*})
\end{align*}
hence the information exponent of this loss function is doubled with respect to $\mathcal L$ worsening the sample complexity for the search phase (geometrically, this can be seen as the square ``flattening" the loss around the origin). 

The removal of the square comes with the side effect that it is not sufficient to just minimize the loss, since the sign becomes relevant: $G$ needs to me maximised in case of platykurtic data since the cumulants of the data are smaller than the gaussian ones, so $\mathbb E[G(w\cdot x)]-\mathbb E[G(x_{gauss})]<0$, and conversely needs to be minimized in case of leptokurtic data distributions.


\subsection{Details on the noisy ICA model model \label{app:details_spiked_cumulant}}
Here we expand the population loss \eqref{eq:loss_ica} in the specific case where the inputs are distributed according to the spike cumulant model \eqref{eq:spiked_cumulant}.
We can expand the likelihood ratio $\ell$ in series of Hermite polynomials in the sense of \Cref{def:herm exp}, and write
  \[
  \ell(v\cdot x)=\sum_{j=0}^{\infty}\frac{c_{j}^\ell }{j!}h_j(v\cdot x).
  \]
  Note that due to the symmetry of $\nu$, $c_{2j+1}^\ell=0$ for any $j\in \N$. Moreover, due to the whitening matrix, also $c_2^\ell=0$; so, apart from $c_0^\ell=1$, the smallest non-zero coefficient is $c_4^\ell$, the one corresponding to first non-trivial cumulant, i.e.~the kurtosis. Using the orthogonality identity for Hermite polynomials (\Cref{bi_orth} in Appendix \ref{app:Hermite}), we get
  \begin{equation*}
  \begin{aligned}
    \mathcal{L}(w) & =
    \mathbb{E}_{\mathbb{P}_0}[G(w \cdot x) \, \ell(v \cdot x)]\\[5pt]
    & = \sum_{i,j\geq 0}^{\infty} \dfrac{c_i^G c_j^{\ell}}{i!j!}\;\mathbb{E}_{\mathbb{P}_0}[h_i(w \cdot x)h_j(v \cdot x)]\\[-2pt]
    & = c_0^G c_0^{\ell} + \dfrac{c_4^G c_4^{\ell}}{4!} \alpha^4 + o(\alpha^4)
\end{aligned}
\end{equation*}
where $\alpha=w\cdot v $ is the only order parameter of the system. For this calculation, we have assumed that $c_4^G \neq 0$, which holds for the contrast functions in \Cref{eq:contrasts}.
This implies that the inference problem of detecting the spike $v$ has information exponent $k^*=4$, in the sense of Definition \ref{def:herm exp} in Appendix \ref{app:Hermite}.\\
Note also that, for this specific data distribution, $\ell$ can be computed explicitly, that is 
\begin{equation}\label{app:likelihood}
    \ell(y) = \mathbb{E}_{\nu}\left[ \sqrt{1+\beta} \exp \left( -\dfrac{1+\beta}{2}\left( \nu - \dfrac{\beta}{1+ \beta}v  \right)^2 + \dfrac{1}{2}\right) \right].
\end{equation} 
For more details regarding this computation, we refer to \citet{szekely2024learning}.

\section{FastICA}\label{app:sec_fast}

In \cref{fig:new_epsilon} we show numerically that the large fluctuations seen in
\cref{fig:figure2} (middle) hold for various values of $\epsilon$. We run FastICA on the noisy ICA model with size batch $d^{3+\epsilon}$, for $\epsilon = 0.2, 0.5, 0.8$. We plot $\alpha_v = v \cdot w$ in
the first four iterations, plus the overlap at initialisation.

We prove now Proposition \ref{prop: infinite_batch_size}, which guarantees that the second order term in the iteration of FastICA does not only speed up the convergence of the algorithm, but helps in escaping the search phase too: indeed, when it is not present, the recovered signal after the first giant step scales as the random weight initialisation, i.e. $\alpha_1^2 = (v \cdot w_1)^2 = O(1/d)$.
However, when the iteration of FastICA is complete, the relevant direction is learned in one single step, if the size-batch is infinite and the dimension of the inputs diverge.
\begin{proof}[Proof of Proposition \ref{prop: infinite_batch_size} (Infinite batch size)]
    We start by recalling that, up to normalisation, the first iteration of FastICA in the population limit, i.e. for a batch of infinite size, reads
    \begin{equation*}
        \widetilde{w}_1 = \mathbb{E}[x \,G'(w_{0} \cdot x)] -
        \mathbb{E}[G''(w_{0} \cdot x)] \, w_{0},
    \end{equation*}
    where the weight vector at initialisation $w_0$ is uniformly drawn in the unit sphere in $\mathbb{R}^d$. Here, we have considered the complete iteration, with both the first order and the second order terms. 
    Since we assume we are in a high dimensional regime, we have that the overlap between the initialisation and the spike that we want to recover scales as $\alpha_0 := w_0 \cdot v = \Theta(1/\sqrt{d})$.
    We want to prove that, after the first step, the spike has been totally learned when $d \to +\infty$, namely 
    \begin{equation*}
    \alpha^2_1:= (w_1 \cdot v)^2 = 1 - o(1).
    \end{equation*}
    To do so, we can expand in the series of Hermite polynomials the squared overlap
    \begin{equation}
        \alpha_1^2 = \dfrac{\pi_1^2}{\|\widetilde{w}_1\|}, \text{ with } \pi_1 := \widetilde{w}_1 \cdot v
    \end{equation}
    at a sufficiently high order.
    First of all, we take into account the first and the second order terms
    \begin{math}
        f := x \,G'(w_{0} \cdot x)
    \end{math}
    and 
    \begin{math}
        g := G''(w_{0} \cdot x) w_0 
    \end{math} and compute their expectations with respect to the data distribution.\\
We compute $\mathbb{E}[f]$ by noticing that $f\in \mathbb{R}^d$ can be written as the sum of a spherical gradient and a derivative along the direction of the initialisation, in the following sense: if 
\begin{math}
    L(w,x):= G(w \cdot x),
\end{math}
 we have that
 \begin{small}
\begin{equation*}
    f = (\mathds{1}_d - w_0 w_0^{\top}) \nabla_w L(w,x)\Bigr|_{w = w_0}+w_0 w_0^{\top}\nabla_w L(w,x)\Bigr|_{w = w_0} = 
    \underbrace{\nabla_{\text{sph}}L(w,x)\Bigr|_{w = w_0}}_{(\star)_1}+ \underbrace{w_0 w_0^{\top}\nabla_w L(w,x)\Bigr|_{w = w_0}}_{(\star)_2}.
\end{equation*}
\end{small}

\begin{figure*}[t!]
  \centering
  \includegraphics[width=.45\linewidth]{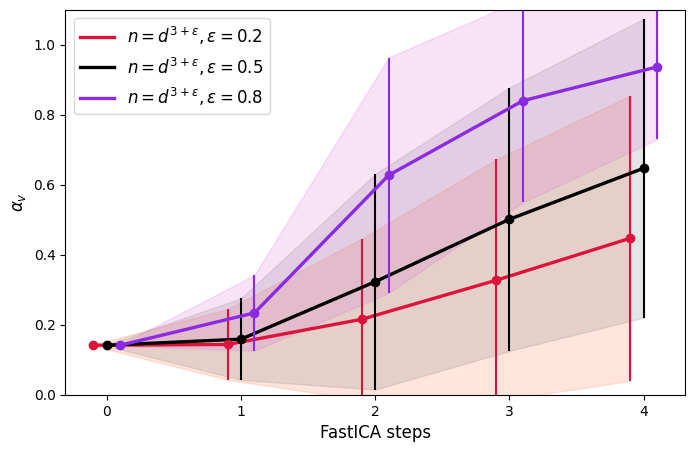}%
  \caption{\label{fig:new_epsilon} \textbf{Performance of FastICA on the spiked
      cumulant model for various sample complexity regimes.} We run the standard
    FastICA updates~\eqref{eq:fastica} on data drawn from a noisy ICA
    model~\eqref{eq:spiked_cumulant}. At each step of the algorithm, we draw a
    new batch of size $n=d^{3+ \varepsilon}$ with $\varepsilon=0.2, 0.5, 0.8$.
    We plot the overlap between the planted non-Gaussian direction $v$ and the
    estimate $w$ produced by FastICA. \Cref{thm:finite_batch_size} confirms this
    picture for a general data model.}
\end{figure*}
\newpage
This splitting has to be done since we need unit vectors to be allowed to apply the orthogonality formulas in Section \ref{app:preliminaries}.
By using the likelihood ratio trick, we get
\begin{align*}
    \mathbb{E}[(\star)_1] = & \mathbb{E}\left[ \frac{\partial}{\partial w}G(w \cdot x)\Bigr|_{w = w_0} \right] =  
    \mathbb{E}_{x \sim \mathcal{N}(0, \mathds{1}_d)}\left[ \frac{\partial}{\partial w}G(w \cdot x)\Bigr|_{w = w_0} \ell(v \cdot x) \right] \\[2pt] 
    = & \frac{\partial}{\partial w} \mathbb{E}_{x \sim \mathcal{N}(0, \mathds{1}_d)} \left[ \left( \sum_{i = 0}^{\infty}\dfrac{c_i^G}{i!} h_i(w \cdot x) \right) \left( \sum_{j = 0}^{\infty}\dfrac{c_j^{\ell}}{j!} h_j(w \cdot x) \right)\Bigr|_{w = w_0} \right] \\[2pt]
    = & \dfrac{\partial}{\partial w}\sum_{i,j=0}^{\infty}\dfrac{c_i^G c_j^{\ell}}{i!j!}
    \mathbb{E}_{x \sim \mathcal{N}(0, \mathds{1}_d)}[h_i(w \cdot x) h_j(w \cdot v)]\Bigr|_{w = w_0} \\[2pt]
    = & \dfrac{\partial}{\partial w} \sum_{k=0}^{\infty} \dfrac{c_k^G c_k^{\ell}}{k!} \alpha_0^k \Bigr|_{w = w_0} 
    = v_{\perp w_0} \left(\sum_{k = 1}^{\infty} \dfrac{c_k^G c_k^{\ell}}{(k-1)!} \alpha_0^{k-1}  \right)
    = v_{\perp w_0} \left(\sum_{k = k^*}^{\infty} \dfrac{c_k^G c_k^{\ell}}{(k-1)!} \alpha_0^{k-1}  \right)
\end{align*}
where $v_{\perp w_0} := v - \alpha_0 w_0$ is the projection of the spike in the space orthogonal to $w_0$. Moreover, recall that $k^*$ is the information exponent of the population loss \eqref{eq:loss_ica}, in the sense of Definition \ref{def:information_exponent}. In the case of the spiked model defined in Section \ref{sec:data-model}, we already know that $k^* = 4$. We keep the terms up to the sixth-order of expansion. Hence,
\begin{equation*}
    \mathbb{E}\left[ (\star)_1 \right] = 
    v_{\perp w_0} \left( \dfrac{c_4^G c_4^{\ell}}{3!}\alpha_0^3 + \dfrac{c_6^G c_6^{\ell}}{5!}\alpha_0^5\right) + o(\alpha_0^6).
\end{equation*}
On the other hand, also have to take into account the component of the gradient in the direction of the initialization. For this second part, we get
\begin{align*}
    \mathbb{E}[(\star)_2] = &
    w_0 \, \mathbb{E} \left[ \frac{d}{d\lambda} L(\lambda w_0, x)\Bigr|_{\lambda = 1} \right] = w_0\, \mathbb{E}\left[G'(w_0 \cdot x) (w_0 \cdot x)\right]\\
    = & w_0 \,\mathbb{E}\left[G'(w_0 \cdot x) h_1(w_0 \cdot x) \ell(v \cdot x) \right]\\
    = & w_0 \sum_{i,j=0}^{\infty} \dfrac{c_i^{G'}c_j^{\ell}}{i!j!}\mathbb{E}_{x \sim \mathcal{N}(0,\mathds{1}_d)}\left[ h_i(w_0 \cdot x) h_1(w_0 \cdot x) h_j(v \cdot x)\right].
\end{align*}
By using Corollary \ref{app:same_argument_hermite} we can expand up to six order, obtaining
\begin{equation*}
    \mathbb{E}[\left(\star\right)_2] = w_0 \left(c_2^G c_0^{\ell} + \dfrac{c_4^G c_0^{\ell}}{3!} \alpha_0^4 + \dfrac{c_6^G c_4^{\ell}}{4!} \alpha_0^4 + \dfrac{c_6^G c_6^{\ell}}{5!} \alpha_0^6 + \dfrac{c_8^G c_6^{\ell}}{6!} \alpha_0^6 \right) +o(\alpha_0^6).
\end{equation*}
In conclusion, summing up the two contributions we simply have
\begin{equation*}
    \mathbb{E}[f] = 
     v_{\perp w_0} \left( \dfrac{c_4^G c_4^{\ell}}{3!}\alpha_0^3 + \dfrac{c_6^G c_6^{\ell}}{5!}\alpha_0^5\right) +
     w_0 \left(c_2^G c_0^{\ell} + \dfrac{c_4^G c_0^{\ell}}{3!} \alpha_0^4 + \dfrac{c_6^G c_4^{\ell}}{4!} \alpha_0^4 + \dfrac{c_6^G c_6^{\ell}}{5!} \alpha_0^6 + \dfrac{c_8^G c_6^{\ell}}{6!} \alpha_0^6 \right) +o(\alpha_0^6).
\end{equation*}
Therefore, it remains to compute $\mathbb{E}[g]$. To do so, it is sufficient to expand $G''$ in series of Hermite polynomials to obtain
\begin{equation*}
    \mathbb{E}[g] = w_0 \sum_{k = 0}^{\infty} \dfrac{c_k^{G''} c_k^{\ell}}{k!} \alpha_0^k = w_0 \left(c_2^G c_0^{\ell} + \dfrac{c_6^G c_4^{\ell}}{4!} \alpha_0^4 
    + c_2^G c_0^{\ell} + \dfrac{c_8^G c_6^{\ell}}{6!} \alpha_0^6\right) + o(\alpha_0^6).
\end{equation*}
Now that we have expanded both $\mathbb{E}[f]$ and $\mathbb{E}[g]$ up to sixth order, after a long but simple calculation, we can conclude by writing
\begin{equation*}
    \pi_1^2 = \|\widetilde{w}_1\|^2 = \dfrac{(c_4^G c_4^{\ell})^2}{3!^2}\alpha_0^6 + o(\alpha_0^6) 
    \quad \Longrightarrow \quad
    \alpha_1^2 = 1 - o(1).
\end{equation*}
This computation has shown that if the first step could possibly be infinitely large, then FastICA would perfectly learn the spike up to contributions which vanish when $d \to \infty$.

We observe now that happens when the iteration of FastICA consists only in the gradient term. 
The different update would be given by 
\begin{math}
    \widetilde{w}_1 = \mathbb{E}[\,x\, G'(w_{0} \cdot x)].
\end{math}
In this case, it is not needed to expand up to sixth-order, since we can easily see that $\alpha_1^2$ scales as bad as the random initialisation. In particular,  
\begin{equation*}
        \pi_1^2 = (c_2^G c_0^{\ell})^2 \alpha_0^2 + o(\alpha_0^2), \;
        \|\widetilde{w}_1\|^2 = (c_2^G c_0^{\ell})^2 + o(1)
        \quad \Longrightarrow \quad
        \alpha_1^2 = \alpha_0^2 +       
        o(\alpha_0^2) = O\Bigl(\frac{1} {d}\Bigl).
    \end{equation*}
\end{proof}

We are now going to prove Theorem \ref{thm:finite_batch_size}, which makes explicit the sample complexity required to learn the spike in a single step with a large, although finite, amount of data points. Recall that,
given $n$ samples, the FastICA iteration reads
\begin{equation*}
    \widetilde{w}_t = \frac{1}{n} \sum_{\nu = 1}^{n}x^{\nu} \,G'(w_{t-1} \cdot x^{\nu}) -
    \frac{1}{n} \sum_{\nu = 1}^n G''(w_{t-1} \cdot x^{\nu}) w_{t-1}.
\end{equation*}
\begin{proof}[Proof of Theorem \ref{thm:finite_batch_size} (Finite batch-size)]
    Define 
    \begin{math}
        f :=  \frac{1}{n} \sum_{\nu = 1}^{n}x^{\nu} \,G'(w_{0} \cdot x^{\nu}) 
    \end{math}
    and 
     \begin{math}
        g :=  \frac{1}{n} \sum_{\nu = 1}^{n}x^{\nu} \,G''(w_{0} \cdot x^{\nu})w_0. 
    \end{math}
    The key ideas of this strategy are borrowed from the proof of Theorem 1 and 2 in \citet{dandi2024two}, where they consider a teacher-student setup and Gaussian inputs.
    First of all, we compute
    the expectations of $\pi_1 = \widetilde{w}_1 \cdot v$ and $\|\widetilde{w}_1\|^2$. After that, we look at how much data we need to obtain concentration for both the quantities, namely what is the scaling for $n$ such that $\pi_1$ and $\|\widetilde{w}_1\|^2$ are approximated, up to a sufficiently small error, by their expectations. Then, we get the scaling for $\alpha_1^2$ by simply computing the ratio between $\mathbb{E}[\pi_1]^2$ and $\mathbb{E}[\|\widetilde{w}_1\|^2]$.
    
    From now on, we consider $n = \Theta(d^{k^*-\delta})$, for $\delta \in [0,2]$.
    In order to compute the expectation of $\pi_1$, we can use the calculation performed in the proof of Proposition \ref{prop: infinite_batch_size} to obtain
    \begin{align*}
        \mathbb{E}[\pi_1] =
         v \cdot \mathbb{E} [f -g] & 
         = v \cdot \dfrac{1}{n} \sum_{\nu=1}^n \left( \mathbb{E} [x^{\nu} G'(w_0 \cdot x^{\nu})] - \mathbb{E} [G''(w_0 \cdot x^{\nu})w_0]
         \right)\\
         & = \dfrac{c_{k^*}^G c_{k^*}^{\ell}}{(k^*-1)!}\alpha_0^{k^*-1} + o(\alpha_0^{k^*-1}),
    \end{align*}
where the last inequality is obtained by following the steps of Proposition \ref{prop: infinite_batch_size} without explicitly using that $k^* = 4$.
Indeed, 
\begin{align*}
    v \cdot \mathbb{E}[f] = c_0^{\ell}c_2^{G} \alpha_0 + \dfrac{c_{k^*}^G c_{k^*}^{\ell}}{(k^*-1)!}\alpha_0^{k^*-1} + o(\alpha_0^{k^*-1}) \quad \text{and} \quad
    v \cdot \mathbb{E}[g] = 
    c_0^{\ell}c_2^{G}\alpha_0 + o(\alpha_0^{k^*-1}). 
\end{align*}

We now have to compute
\begin{math}
    \mathbb{E}[\|w_1\|^2] =  \mathbb{E}[\|f\|^2] +
    \mathbb{E}[\|g\|^2] - 2\,
    \mathbb{E}[f \cdot g].
\end{math}
Since 
\begin{align*}
    \mathbb{E}[\|g\|^2] = & 
    \dfrac{1}{n^2} \mathbb{E}\biggr[ \sum_{\nu,\nu'=1}^n (x^{\nu} \cdot x^{\nu'})\, G'(w_0 \cdot x^{\nu})\,  G'(w_0 \cdot x^{\nu'})\biggr]\\
    = & \dfrac{1}{n^2} \mathbb{E}\biggr[ \sum_{\nu \neq \nu'}^n (x^{\nu} \cdot x^{\nu'})\, G'(w_0 \cdot x^{\nu})\,  G'(w_0 \cdot x^{\nu'})\biggr] +
    \dfrac{1}{n^2} \mathbb{E}\biggr[ \sum_{\nu=1}^n \|x^{\nu}\|^2 \, G'(w_0 \cdot x^{\nu})^2 \biggr] \\
    = & \dfrac{n (n-1)}{n^2}\|\mathbb{E}[g]\|^2 + 
    \dfrac{1}{n} \mathbb{E}\biggr[ \|x\|^2 \, G'(w_0 \cdot x)^2 \biggr]
\end{align*}
and it is possible to do the same calculation for $\mathbb{E}[\|f\|^2]$ and $\mathbb{E}[f \cdot g]$, we get that
\begin{equation}\label{eq:two_lines}
\begin{aligned}
    \mathbb{E}[\|\widetilde{w}_1\|^2] & = 
    \dfrac{n (n-1)}{n^2} \left( \|\mathbb{E}[g]\|^2 +
    \|\mathbb{E}[f]\|^2 -2\,
    \mathbb{E}[g]\cdot 
    \mathbb{E}[f]
    \right)\\  
    & + \dfrac{1}{n}\left( \mathbb{E}[\|x\|^2 G'(w_0 \cdot x)^2] + 
    \mathbb{E}[G''(w_0 \cdot x)]
    -2 \,\mathbb{E}[(x \cdot w_0) G'(w_0 \cdot x) G''(w_0 \cdot x)]
    \right).
\end{aligned}
\end{equation}
We can compute the scaling for the last three terms by using Lemma \ref{multi_orth}. On the other hand, the scaling for each of the first three terms is obtained by expanding $\mathbb{E}[f]$ and $\mathbb{E}[g]$ up to sixth-order and repeating the same calculation of Proposition \ref{prop: infinite_batch_size} while avoiding the evaluation $k^*=4$. We get that, for any $n$, we have
\begin{small}
\begin{align*}
    \mathbb{E}[\|w_1\|^2] & = \dfrac{n(n-1)}{n^2}\left(\dfrac{(c_{k^*}^G c_{k^*}^{\ell})^2}{(k^*-1)!^2}\alpha_0^{2(k^*-1)}\right)\\
    & + \dfrac{1}{n} \biggr[d \left(\sum_{k \in 2\mathbb{N}+1}\dfrac{c_k^{G'}c_k^{\ell}}{k!}\right) + O(d \alpha_0^{k^*}) +
    \left(\sum_{k \in 2\mathbb{N}}\dfrac{c_k^{G''}c_k^{\ell}}{k!}\right) + O(\alpha_0^{k^*}) - 2 c_0^{G''}c_1^{G'}c_0^{\ell} + O(1) \biggr]\\
    & = \dfrac{(c_{k^*}^G c_{k^*}^{\ell})^2}{(k^*-1)!^2}\alpha_0^{2(k^*-1)}
    -\dfrac{1}{n^2}\left(
    \dfrac{(c_{k^*}^G c_{k^*}^{\ell})^2}{(k^*-1)!^2}\alpha_0^{2(k^*-1)}
    \right) + 
     \dfrac{1}{n} \biggr[d \left(\sum_{k \in 2\mathbb{N}+1}\dfrac{c_k^{G'}c_k^{\ell}}{k!}\right) + O(d \alpha_0^{k^*}) \\
    & + \left(\sum_{k \in 2\mathbb{N}}\dfrac{c_k^{G''}c_k^{\ell}}{k!}\right)
    + O(\alpha_0^4) - 2 c_0^{G''}c_1^{G'}c_0^{\ell} + O(1) \biggr].
\end{align*}
\end{small}
Hence, we start to keep into account the different choices of $\delta \in [0,2]$. Recall that $n=\Theta(d^{k^*-\delta})$.
We need to know that is the addendum which dominates in 
the previous formula, depending on $k^*$. By looking at the scaling of each addendum, one can see that there are two possible regimes, which correspond to the two cases
\begin{math}
    \alpha_0^{2(k^*-1)} \lessgtr d/n.
\end{math}
In particular, when $\delta=0$, the signal coming from the contribution of the first line in \eqref{eq:two_lines} dominates. More precisely, we get that
\begin{equation*} 
    \begin{cases}
        \delta \in (0,2] \quad \Rightarrow \quad
        \mathbb{E}[\|w_1\|^2] = \Theta\Bigl( \dfrac{d}{n}\Bigr),\\[8pt]
        \delta = 0 \quad \Rightarrow \quad
        \mathbb{E}[\|w_1\|^2] = \dfrac{(c_{k^*}^G c_{k^*}^{\ell})^2}{(k^*-1)!^2} \alpha_0^{2(k^*-1)} + o(\alpha_0^{2(k^*-1)}).
    \end{cases}
\end{equation*}
Hence, we have the scaling of the expectations of $\pi_1$ and $\|w_1\|^2$. It is left to see how much data samples are required to approximate well these expectations with high probability, i.e. for which sample complexity $\pi_1$ and $\|w_1\|^2$ concentrate. 
Thanks to Assumptions \ref{assumption1}, \ref{assumption2} and \ref{assumption3}, the proof is analogous to the one in Section B.4 from \cite{dandi2024two}, leading to:
\begin{align*}
    \lvert \pi_1 - \mathbb{E}[\pi_1] \rvert \leq 
    \lvert f \cdot v - \mathbb{E}[f \cdot v] \rvert + 
    \lvert g \cdot v - \mathbb{E}[g \cdot v] \rvert = O\Bigl(\dfrac{\text{log}(n)}{\sqrt{n}}\Bigl).
\end{align*}
and
\begin{align*}
    \lvert \|w_1\|^2 - \mathbb{E}[\|w_1\|^2] \rvert & \leq 
    \lvert \|f\|^2 - \mathbb{E}[\|f\|^2] \rvert + 
    \lvert \|g\|^2 - \mathbb{E}[\|g\|^2] \rvert +
    2\, \lvert f \cdot g - \mathbb{E}[f \cdot g] \rvert \\
    &=O\left( 
    \dfrac{d \,\log(n)^6}{n\sqrt{n}} + 
    \dfrac{\log(d)^6}{n\sqrt{d}} +
    \dfrac{\log(n)^2}{n} +
    \dfrac{\log(n)^{k^*}}{d^{(k^* -1)/2}\sqrt{n}}
    \right).
\end{align*}
\textbf{Negative results:} We start by analysing what happens when $\delta \in (0,2]$. 
We have with high probability that
\begin{small}
\begin{align*}
    \|w_1\|^2 & \geq \lvert \mathbb{E}[\|w_1\|^2]\rvert - \lvert \mathbb{E}[\|w_1\|^2] - \|w_1\|^2 \rvert = 
    \Theta\Bigl(\dfrac{d}{n}\Bigr) - O\left( 
    \dfrac{d \,\log(n)^6}{n\sqrt{n}} + 
    \dfrac{\log(d)^6}{n\sqrt{d}} +
    \dfrac{\log(n)^2}{n} +
    \dfrac{\log(n)^{k^*}}{d^{(k^* -1)/2}\sqrt{n}}
    \right)
\end{align*}
\end{small}
which implies that
\begin{equation*}
    \|w_1\|^2 = \Omega\left(\dfrac{d}{n}\right).
\end{equation*}
Moreover, with high probability, 
\begin{align*}
    \lvert \pi_1 \rvert & \leq \lvert \mathbb{E}[\pi_1]\rvert + \lvert \mathbb{E}[\pi_1] - \pi_1 \rvert 
    = O\Bigl( \dfrac{1}{d^{(k^*-1)/2}} \Bigl) + O\left( \frac{\log(n)}{\sqrt{n}}\right) \quad\Rightarrow\quad
    \pi_1 = O\Bigl( \dfrac{1}{d^{(k^*-1)/2}} \Bigl) + O\left( \frac{\log(n)}{\sqrt{n}}\right).
\end{align*}
We have to split the reasoning in two cases.

\paragraph{Case 1}: If $\delta \in (0,1]$, with high probability
\begin{math}
    \pi_1 = O\Bigl( \dfrac{1}{d^{(k^*-1)/2}} \Bigl)
\end{math}
and then 
\begin{equation*}
    \alpha_1^2 = \dfrac{\pi_1^2}{\|w_1\|^2} = 
    \dfrac{O\left(\dfrac{1}{d^{k^*-1}} \right)}{\Omega\left(\dfrac{d}{n}\right)}=
    \dfrac{O\left(\dfrac{1}{d^{k^*-1}} \right)}{\Omega\left(\dfrac{d^{\delta}}{d^{k^* -1}}\right)} = O\Bigl(\dfrac{1}{\delta^{\delta}}\Bigr).
\end{equation*}
It means that the overlap at step one is bounded by above by a quantity which
vanishes when the dimension of the inputs $d$ goes to infinity, which is
definitely a bad news for learning in high dimensions with FastICA. Note however
that this case bridges smoothly the cases $\delta=0$ and the case $\delta=1$,
which correspond to $d^{k^*-1}$, i.e. the regime in which online SGD is able to
learn.

\paragraph{Case 2:} If $\delta = [1,2]$, the reason why FastICA does not learn is not an adverse ration between two random, although concentrated, quantities. In this case, 
the number of samples is not enough to reach concentration for $\pi_1$, that is 
\mbox{\begin{math}
    \pi_1 = O\Bigl( \frac{\log(n)}{\sqrt{n}}\Bigr).
\end{math}}
Therfore,
\begin{equation*}
    \alpha_1^2 = 
    \dfrac{\pi_1^2}{\|w_1\|^2} = 
    \dfrac{O\left( \dfrac{\text{polylog}(n)}{n}\right)} {\Omega\left(\dfrac{d}{n}\right)}=
    O\left( \dfrac{\text{polylog}(d)}{d}\right).
\end{equation*}
The latter negative result is \textit{qualitatively} different from the previous one, since it is due to the lack of concentration of $\pi_1$. The spike is not recovered at all, meaning that the scaling of $\alpha_1$ is as bad as the one provided by the random weights initialisation.

\paragraph{Positive result:} Let's consider $\delta = 0$, which means that we
employee an extensive number of data points, namely $n=\Theta(d^{k^*})$. Then,
\begin{align*}
    \|w_1\|^2 & \leq \lvert \mathbb{E}[\|w_1\|^2]\rvert + \lvert \mathbb{E}[\|w_1\|^2] - \|w_1\|^2 \rvert = 
    \dfrac{(c_{k^*}^G c_{k^*}^{\ell})^2}{(k^*-1)!^2} \alpha_0^{2(k^*-1)} + o(\alpha_0^{2(k*-1)}).
\end{align*}
On the other hand, we also have the following lower bound:
\begin{align*}
    \lvert \pi_1 \rvert & \geq \lvert \mathbb{E}[\pi_1]\rvert - \lvert \mathbb{E}[\pi_1] - \pi_1 \rvert = 
    \lvert \dfrac{c_4^Gc_4^{\ell}}{3!} \alpha_0^3 \rvert - o(\alpha_0^3).
\end{align*}
In conclusion, we get 
\begin{align*}
    \alpha_1^2 = \dfrac{\pi_1^2}{\|w_1\|^2} \geq \dfrac{\dfrac{(c_{k^*}^G c_{k^*}^{\ell})^2}{(k^*-1)!^2} \alpha_0^{2(k^*-1)} - o(\alpha_0^{2(k^*-1)})}{\dfrac{(c_{k^*}^G c_{k^*}^{\ell})^2}{2(k^*-1)!^2} \alpha_0^{2(k^*-1)} + o(\alpha_0^{2(k^*-1)})} = 1 - o(1),
\end{align*} 
meaning that in this regime, when $d \to +\infty$, the relevant direction is perfectly recovered by the first iteration of FastICA.
\end{proof}

\section{Smoothing the landscape}
\label{app:smoothing}

For implementation purposes, we start by writing
the formula for the spherical gradient of the smoothed loss, that is 
\begin{small}
\begin{equation*}
    \nabla_{\text{sph}} L_{\lambda}(w, x) = \nabla_{\text{sph}} \mathcal{L}_{\lambda}(G(w\cdot x)) = 
    x \; \mathbb{E}_{z_1} \left[ G'\left( \dfrac{w \cdot x + \lambda z_1 \|P^{\perp}_w x\|}{\sqrt{1+\lambda^2}}\right) \left( \dfrac{1}{\sqrt{1+\lambda^2}} - \dfrac{\lambda z_1 (w \cdot x)}{\|P^{\perp}_w x\| \sqrt{1+\lambda^2}} \right) \right],
\end{equation*}
\end{small}
where $z_1$ is one of the components, for example the first one, of a random vector $z \sim \text{Unif}(\mathbb{S}^{d-2})$.
This formula can be obtained by following the proof of Lemma 9 in \cite{damian2023smoothing}.
Note that we need to compute the spherical gradient in order to write the iteration of \textit{smoothed online SGD}.

We now briefly comment on the benefit of smoothing the loss landscape, which is given by taking into account the expectation of the contrast function evaluated in the dot product between the (normalized) vector 
\begin{math}
    w + \lambda z 
\end{math}
and the vector of the inputs $x$, instead of evaluating the loss in
$s = w \cdot x$. The smoothing operator is defined in
\cref{def:loss_smoothed}. Recall that $z$ is a vector of white noise and note
that in \cref{thm: smooth} the parameter $\lambda$ is large, namely it scales
with the dimension $d$ of the inputs.  The key intuition behind the smoothing
operator is that, thanks to large $\lambda$, it allows to evaluate the loss
function in regions that are far from the updated weights $w_t$, collecting
non-local signal that alleviates the flatness of the saddle of the loss function
at the beginning of learning, reducing the length of the search phase.  In
\cref{fig:smoothing_lambda} we plot the smoothed loss function, in the cases of
$\lambda = d^{1/4}$, $\lambda = d^{1/8}$ and $\lambda = 0$. The first case
corresponds to the optimal value of $\lambda$, that is, the one for which the
optimal thresholds of \cref{thm: smooth} are obtained. The latter case
corresponds to the absence of smoothing. It can be interesting to recall that
for $\lambda \gg d^{1/4}$ we have the ``noise' which dominates the ``signal', in
the sense of \cref{choice_lambda}, where the admissible values of $\lambda$ are
chosen.  For the plots in \cref{fig:smoothing lambda} we have used $n = 10^4$
number of samples, $d = 10$ and $G(s) = - h_4(s)$.

In the following section we will go through the rigorous argument that formalizes the heuristic derivation presented in Section \ref{sec:heuristic}.
\subsection{Proof}
We prove now Theorem \ref{thm: smooth}, which provides the timescales for recovering the relevant direction with smoothed online SGD. Since the Gaussian version of this theorem has already been proved in \cite{damian2023smoothing}, we will only emphasise the main differences between the Gaussian and non-Gaussian case, which are mainly due to the presence of the (bounded) likelihood ratio and the fact that the analysis cannot take into account only the information exponent $k^*_2$ of the contrast function, but also the information exponent of the population loss, and those in general are not equal. However, we will conclude that smoothed online SGD obtains the optimal sample complexity only in the matched case, which is for $k^*_1 = k^*_2$.

\begin{figure*}[t!]
  \centering
  \includegraphics[width=.95\linewidth]{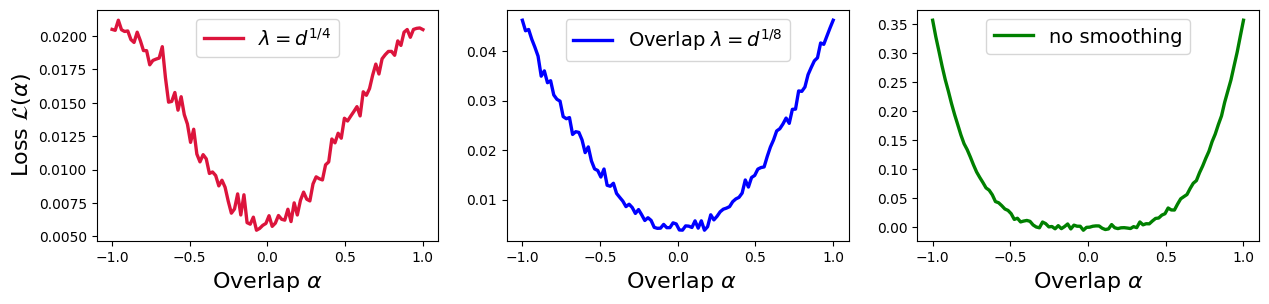}%
  \caption{\label{fig:smoothing lambda} \textbf{Benefit of smoothing the loss depending on different values of the parameter $\lambda$:} We show the smoothed loss $\mathcal{L}_\lambda[G(w \cdot x)]$ for different scalings of the smoothing parameter $\lambda$. We can see the loss is most flat around the origin without smoothing, while increasing the smoothing parameter makes the loss more peaked around the origin, making it easier to escape the origin. See \cref{app:smoothing} for details.}
\end{figure*}

\begin{proof}[Proof of Theorem \ref{thm: smooth} (Escaping mediocrity)]
    The proof is made of two parts. The first part is dedicated to the computation of the scaling for the two main quantities that take part in the iteration of smoothed online SGD, i.e. the signal and the noise. The first part ends when we have collected all the ingredients to prove an analogue of the thesis of Lemma 15 in \cite{damian2023smoothing}, which for us is given by \eqref{eq:update_smoothing}. This formula guarantees that the overlap between the spike and the updated weight vector given by smoothed online SGD is precisely the sum of the previous overlap, the signal, the noise and a suitable martingale term. The second part is dedicated to the application of standard arguments in probability theory (for the case of vanilla and smoothed SGD see e.g. \cite{benarous2021online} and \cite{damian2023smoothing} respectively), from which it is straight forward to conclude that at some point, i.e. for a certain number of used data points, SGD reaches a regime in which the signal dominates the noise and the spike is recovered.

    \textbf{Signal}: We start by studying the scaling of the population (spherical) gradient $\nabla_{\text{sph}} L_{\lambda}(w)$. This scaling will depend on the information exponent $k^*_1$ of the population loss.  
    To do so, 
    consider the expansion as series of Hermite polynomials of the population loss 
    \begin{equation*}
        \mathcal{L}(w) = 
        \sum_{k \geq k^*_1}^{\infty} \frac{c_k^G c_k^{\ell}}{k!} \alpha^k,
    \end{equation*}
    where $k^*_1$ is its information exponent.
    In Lemma 8 of \cite{damian2023smoothing} it is shown that we can apply the smoothing operator to each term and obtain that, for any $k \geq 0$, 
    \begin{equation*}
        \mathcal{L}_{\lambda}(\alpha^k) \approx s_k(\alpha, \lambda), \text{ with } 
        s_k(\alpha, \lambda) := \dfrac{1}{(1+\lambda^2)^{k/2}} 
        \begin{cases}
            \alpha^k \quad & \alpha^2 \geq \frac{\lambda^2}{d}, \\
            (\frac{\lambda^2}{d})^{k/2} \quad & \alpha^2 \leq \frac{\lambda^2}{d} \text{ and $k$ is even}, \\
            \alpha(\frac{\lambda^2}{d})^{(k-1)/2} \quad & \alpha^2 \leq \frac{\lambda^2}{d} \text{ and $k$ is odd}.
        \end{cases}
    \end{equation*}
    By computation, we can obtain that the population (spherical) gradient is 
    \begin{math}
        \nabla_{\text{sph}} L_{\lambda}(w) = (v - \alpha w) c_{\lambda}(\alpha), 
    \end{math}
    with 
    \begin{equation*}
        c_{\lambda}(\alpha) := \sum_{k \geq k^*_1}\dfrac{c_k^G c_k^{\ell}}{k!} \frac{d}{d\alpha}\mathcal{L}_{\lambda}(\alpha^k). 
    \end{equation*}
    The scalar product of this population gradient term and the relevant direction, i.e.
    \begin{math}
        v \cdot \nabla_{\text{sph}} L_{\lambda}(w) = (1 - \alpha^2)c_{\lambda}(\alpha),
    \end{math}
    is called \textit{signal}. 
    It can be proved, like in Lemma 10 of \cite{damian2023smoothing}, that 
    \begin{equation*}
        c_{\lambda}(\alpha) \approx \dfrac{s_{k^*_1 - 1}(\alpha,\lambda)}{\sqrt{1+\lambda^2}}.
    \end{equation*}
    \textbf{Noise}: Now we take a look at the noise term. 
    Fix $u \in \mathbb{S}^{d-1}$, with $u \perp w$. Recall that we are using the notation $g =\nabla_{\text{sph}} L_{\lambda}(w,x)$. Since we have assumed that the likelihood ratio $\ell$ is bounded (e.g. as the case of inputs drawn according to the noisy ICA model \eqref{eq:spiked_cumulant}), we get
    \begin{align}\label{choice_lambda}
        \mathbb{E}\left[ (u \cdot g)^2 \right]  & =
        \mathbb{E}_{x \sim \mathcal{N}(0,\mathds{1}_d)}\left[ (u \cdot g)^2 \ell(v \cdot x) \right] \leq \|\ell\|_{\infty} \,
        \mathbb{E}_{x \sim \mathcal{N}(0,\mathds{1}_d)}\left[(u \cdot g)^2 \right] \\
        & \lesssim \dfrac{\min \left( 1+ \lambda^2, \sqrt{d} \right)^{-(k^*_2 - 1)}}{1 + \lambda^2} \lesssim 
        (1+\lambda^2)^{-k^*_2},
    \end{align}
    where we have used the fact that $\lambda \in [1, d^{1/4}]$ and Corollary 1 in \cite{damian2023smoothing}.
    In the last inequality, thanks to the choice $\lambda \in [1, d^{1/4}]$, we using the fact that
    \begin{equation*}
        \min \left( 1+ \lambda^2, \sqrt{d} \right)^{-(k^*_2 - 1)} = 
        \left( 1+ \lambda^2\right)^{-(k^*_2 - 1)}
    \end{equation*}
    Note that we have obtain an explicit dependence on the information exponent $k^*_2$ of the contrast function.
    We can estimate now the second moment of the smoothed spherical gradient, i.e.
    \begin{math}
        \mathbb{E}[\|g\|^2],
    \end{math}
    which we call \textit{noise}. To do so, recall that $g$ is a spherical gradient and then of course $g \perp w$. To get an upper bound on the scaling of the noise, it is sufficient to fix an orthonormal basis in $\mathbb{R}^d$ such that $e_1 = w$.
    Then, 
    \begin{equation*}
        \mathbb{E}[\|g\|^2] = \sum_{i = 2}^d \mathbb{E}\left[ (e_i \cdot g)^2 \right] \lesssim d \, O((1+\lambda^2)^{-k_2^*}).
    \end{equation*}
    Now that we have the desired upper bound, it is possible to compute the scaling for the expectation of some $p$- moments of $\|g\|^2$, for $p$-sufficiently large. This corresponds to Corollary 2 in \cite{damian2023smoothing}. 
    
    By considering both the scaling of the signal and the noise, we can now state the
    non-Gaussian version of Lemma 15 in \cite{damian2023smoothing}, which tells us that if $\alpha_{t+1} := v \cdot w_{t+1}$ and 
    \begin{math}
        w_{t+1} := \dfrac{w_t+\eta g_{t}}{\|w_t+\eta g_{t}\|},
    \end{math}
    we obtain that with high probability 
    \begin{equation} \label{eq:update_smoothing}
    \alpha_{t+1} = \alpha_t + \eta (1-\alpha_t^2) c_{\lambda}(\alpha_t) +Z_t+ O(\mathbb{E}[\|g_t\|^2]).
    \end{equation}    
    where $\left\{Z_t\right\}_{t\in \N}$ is a martingale term with zero mean defined as
    \[
    Z_t:= \eta\left(g_t\cdot v\right)-\mathbb E\left[\eta\left(g_t\cdot v\right)\right]+r_t-\mathbb E\left[r_t\right]
    \]
    and
    \[
    r_t:= \left[\alpha_t+\eta \, (g_t\cdot v)\right]\left(\frac{1}{\sqrt{1+\eta^2||g_t||^2}}-1\right).
    \]
    Hence, at each step of smoothed online SGD, the updated overlap between the new weight vector and the spike is nothing but the previous overlap plus a martingale term, the signal - whose scaling depends on $k_1^*$ - and the noise - whose scaling depends on $k_2^*$.
We have reached the setting of Lemma 16 of \cite{damian2023smoothing}. From here on, the proof can be concluded by applying the argument that translates the signal and noise estimated into sample complexities, introduced in the setting of vanilla online SGD by \cite{benarous2021online}. The proofs of Lemma 16 and Lemma 17 from \cite{damian2023smoothing} can be repeated straight forwardly, leading to the required sample complexities for recovering the spike. 
\end{proof}

\section{Additional numerics}
\begin{figure*}[h!]
  \centering
  \includegraphics[width=\linewidth]{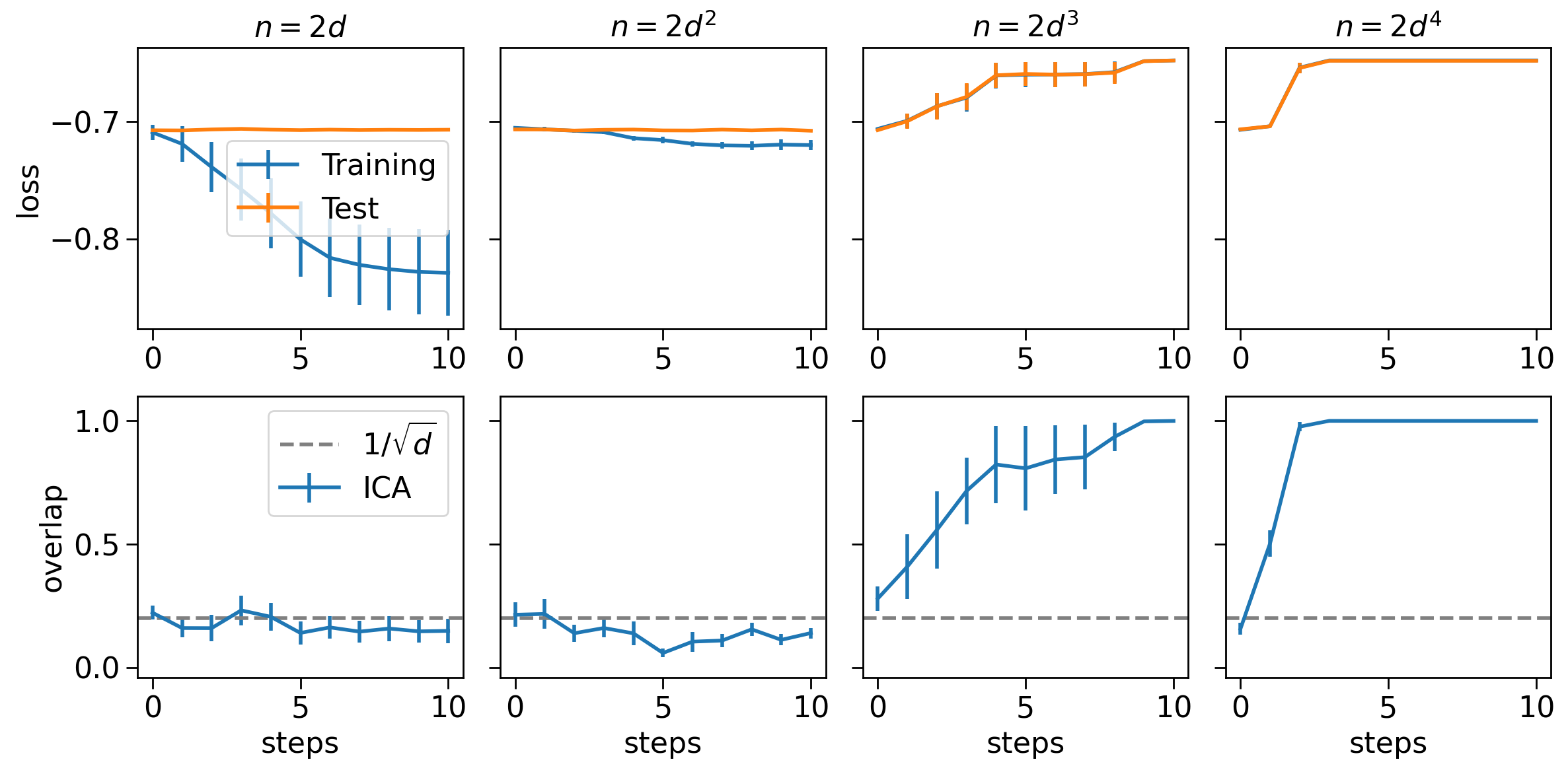}%

\caption{\label{fig:fastica_rade_reuse}\textbf{Performance of FastICA on noisy ICA model with Rademacher prior on the latent \emph{without} resampling the training data at each step}. \emph{Parameters}: input dimension $d=25$, signal-to-noise ratio $\beta=5$, online learning.}
\end{figure*}

\begin{figure*}[b!]
  \centering
  \includegraphics[trim=0 0 0 30,clip,width=\linewidth]{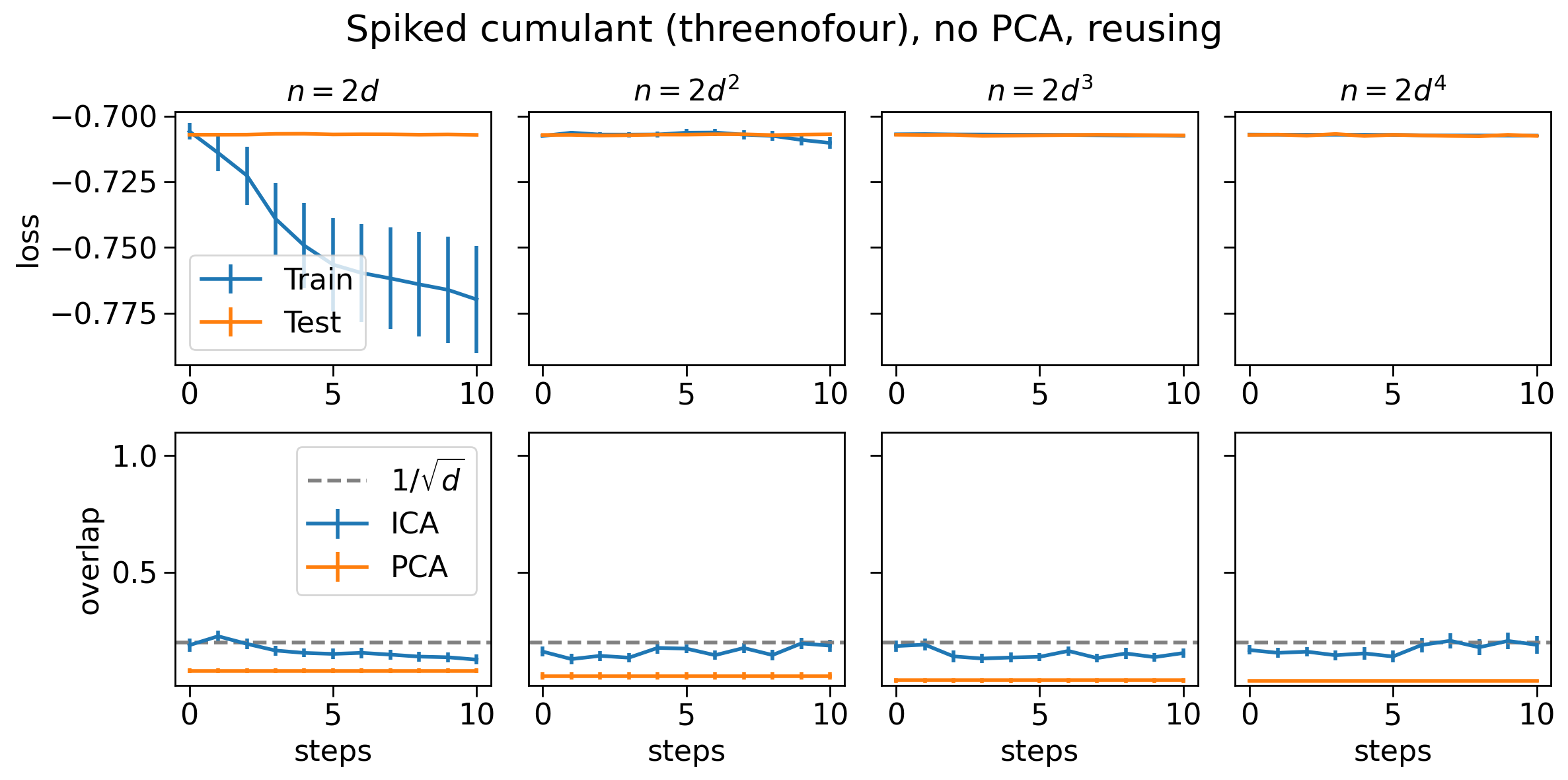}%
  \caption{\label{fig:fastica-3not4}\textbf{Performance of online FastICA on
      noisy ICA model with the ``3not4'' prior on the latent variable}. Under
    this prior, the latent has mean zero, unit variance, a non-trivial
    third-order cumulant, but zero fourth-order cumulant; see \cref{sec:3not4}
    for details. \emph{Parameters}: input dimension $d=25$, signal-to-noise
    ratio $\beta=5$, online learning.}
\end{figure*}

\begin{figure*}[b!]
  \centering
  \includegraphics[width=\linewidth]{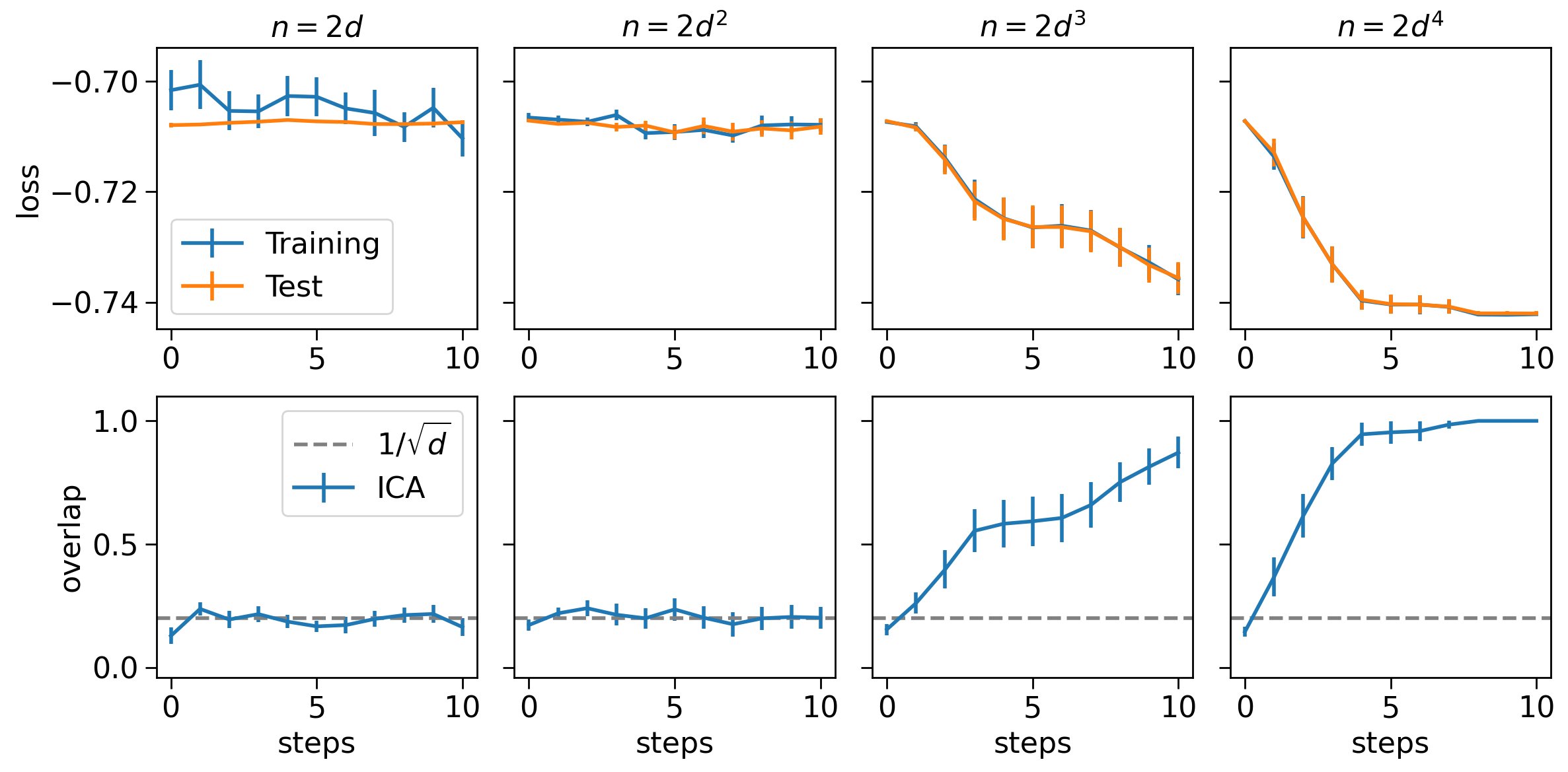}%
  \caption{\label{fig:fastica_laplace}\textbf{Performance of online FastICA on
      noisy ICA model with Laplace prior on the latent}. \emph{Parameters}:
    input dimension $d=25$, signal-to-noise ratio $\beta=5$, online learning.}
\end{figure*}

\end{document}